\newtheorem{theorem}{Theorem}
\newtheorem{lemma}{Lemma}
\newtheorem{corollary}{Corollary}
\newtheorem{definition}{Definition}
\newtheorem{assumpiton}{Assumption}
\newcommand{\etal}{\emph{et al.}}
\newcommand{\eg}{\emph{e.g.}}
\newcommand{\ie}{\emph{i.e.}}
\title{Ensemble Robustness and Generalization of Stochastic Deep Learning Algorithms}
\author[$\ddagger$]{Tom Zahavy}
\author[*]{Bingyi Kang}
\author[$\ddagger$]{Alex Sivak}
\author[*]{Jiashi Feng}
\author[$\dagger$]{Huan Xu}
\author[$\ddagger$]{Shie Mannor}
\affil[*]{Department of ECE, National University of Singapore}
\affil[$\ddagger$]{Department of EE, Technion}
\affil[$\dagger$]{Department of ISE, National University of Singapore}
\date{}
\begin{document}

\maketitle

\begin{abstract}
The question why deep learning algorithms generalize so well has attracted increasing research interest. However, most of the well-established approaches, such as hypothesis capacity, stability or sparseness, have not provided complete explanations \citep{zhang2016understanding,kawaguchi2017generalization}. In this work, we focus on the robustness approach \citep{xu2012robustness}, i.e., if the error of a hypothesis will not change much due to perturbations of its training examples, then it will also generalize well. As most deep learning algorithms are stochastic (e.g., Stochastic Gradient Descent, Dropout, and Bayes-by-backprop), we revisit the robustness arguments of \citeauthor{xu2012robustness}, and introduce a new approach~\textendash~ensemble robustness~\textendash~that  concerns the robustness of a \emph{population} of hypotheses. Through the lens of ensemble robustness, we reveal that a stochastic learning algorithm can generalize well as long as its sensitiveness to adversarial perturbations is bounded in average over training examples. Moreover, an algorithm may be sensitive to some adversarial examples \citep{goodfellow2014explaining} but still generalize well. To support our claims, we provide extensive simulations for different deep learning algorithms and different network architectures exhibiting a strong correlation between ensemble robustness and the ability to generalize. 
\end{abstract}

\section{Introduction}

Deep Neural Networks (DNNs) have been successfully applied in many artificial intelligence tasks, providing state-of-the-art performance and a remarkably small generalization error. On the other hand, DNNs often have far more trainable model parameters than the number of samples they are trained on and were shown to have a large enough capacity to memorize the training data \citep{zhang2016understanding}. Thus, most statistical learning theory that explains generalization via hypothesis capacity struggle to explain the generalization ability of large artificial neural networks. \\

In this work, we focus on a different approach to study generalization of DNNs, i.e., the connection between the robustness of a deep learning algorithm and its ability to generalize. \citeauthor{xu2012robustness} have shown that if an algorithm is robust (\ie, its empirical loss does not change dramatically for perturbed samples), its generalization performance can also be guaranteed. However, in the context of DNNs, practitioners observe \textbf{contradicting} evidence between these two attributes. On the one hand, DNNs generalize well, and on the other, they are fragile to adversarial perturbation on the inputs~\citep{szegedy2013intriguing,goodfellow2014explaining}. Nevertheless, adversarial training (methods based on generating adversarial examples to training examples and using them during training) have been shown to improve the generalization of deep neural network models~\citep{szegedy2013intriguing,goodfellow2014explaining,shaham2015understanding}, indicating an implicit connection between the robustness of a neural net and its ability to generalize. Moreover, it was observed that dropout, coupled with adversarial training, is best at hindering memorization without reducing the model’s ability to learn \citep{arpit2017closer}.  \\

In order to solve this contradiction, we revisit the robustness argument in~\citep{xu2012robustness} and present \emph{ensemble robustness}, to characterize the generalization performance of deep learning algorithms. Our proposed approach is not intended to give tight performance guarantees for general deep learning algorithms, but rather to pave a way for addressing the question: how can deep learning perform so well while being fragile to adversarial examples? Answering this question is difficult, yet we present evidence in both theory and simulation strongly suggesting that \emph{ensemble robustness} is crucial to the generalization performance of deep learning algorithms. \\

Ensemble robustness concerns the fact that a randomized algorithm (\eg, Stochastic Gradient Descent (SGD), Dropout \citep{srivastava2014dropout}, Bayes-by-backprop \citep{blundell2015weight}, etc.) produces a distribution of hypotheses instead of a deterministic one. Therefore, ensemble robustness takes into consideration robustness of the \emph{population} of the hypotheses: even though some hypotheses may be sensitive to perturbation on inputs, an algorithm can still generalize well as long as most of the hypotheses sampled from the distribution are robust on average. \citeauthor{kawaguchi2017generalization} (\citeyear{kawaguchi2017generalization}) took a different approach and claimed that deep neural networks can generalize well despite nonrobustness. However, our definition of ensemble robustness together with our empirical findings suggest that deep learning methods are typically robust although being fragile to adversarial examples. \\

Through ensemble robustness, we prove that the following holds with a high probability: randomized learning algorithms can generalize well as long as its output hypothesis has bounded sensitiveness to perturbation in average (see Theorem \ref{theo:high_prob_bd}). Specified for deep learning algorithms, we reveal that if hypotheses from different runs of a deep learning method perform consistently well in terms of robustness, the performance of such deep learning method can be confidently expected. Moreover, each hypothesis may be sensitive to some adversarial examples as long as it is robust on average. \\

Although ensemble robustness may be difficult to compute analytically, we demonstrate an empirical estimate of ensemble robustness and investigate the role of ensemble robustness via extensive simulations. The results provide supporting evidence for our claim: ensemble robustness consistently explains the generalization performance of deep neural networks. Furthermore, ensemble robustness is measured solely on training data, potentially allowing one to use the testing examples for training and selecting the best model based on its ensemble robustness. 

\section{Related Works}
%Robustness
Xu \etal (~\citeyear{xu2012robustness}) proposed to consider model robustness for estimating generalization performance for deterministic algorithms, such as for SVM \citep{xu2009robustness} and Lasso \citep{xu2009robust}. They suggest using robust optimization to construct learning algorithms, \ie, minimizing the empirical loss with respect to the adversarial perturbed training examples. \\

%Random algorithms
Introducing stochasticity to deep learning algorithms has achieved great success in practice and also receives theoretical investigation. Hardt \etal~(\citeyear{hardt2015train}) analyzed the stability property of SGD methods, and Dropout \citep{srivastava2014dropout} was introduced as a way to control over-fitting by randomly omitting subsets of features at each iteration of a training procedure. Different explanations for the empirical success of dropout have been proposed, including, avoiding over-fitting as a regularization method \citep{baldi2013understanding, wager2013dropout,jain2015drop} and explaining dropout as a Bayesian approximation for a Gaussian process \citep{gal2015dropout}. Different from those works, this work will extend the results in~\citep{xu2012robustness} to randomized algorithms, in order to analyze them from an ensemble robustness perspective.\\

%Random algorithms
Adversarial examples for deep neural networks were first introduced in~\citep{szegedy2013intriguing}, while some recent works propose to utilize them as a regularization technique for training deep models~\citep{goodfellow2014explaining,gu2014towards, shaham2015understanding}. However, all of those works attempt to find the ``worst case'' examples in a local neighborhood of the original training data and are not focused on measuring the global robustness of an algorithm nor on studying the connection between robustness and generalization. 

\section{Preliminaries}
In this work, we investigate the generalization property of stochastic learning algorithms in deep neural networks, by establishing their PAC bounds.
In this section, we provide some preliminary facts that are necessary for developing the approach of ensemble robustness. After introducing the problem setup we are interested in, we in particular highlight the inherent randomness of deep learning algorithms and give a formal description of randomized learning algorithms. Then, we briefly review the relationship between robustness and generalization performance established in \citep{xu2012robustness}.

\paragraph{Problem setup}
 We now introduce the learning setup for deep neural networks, which follows a standard one for supervised learning. More concretely, we have $ \mathcal{Z} $ and $ \mathcal{H} $ as the sample set and the hypothesis set respectively.  The training sample set $ \mathbf{s} = \{s_1,\ldots, s_n \} $ consists of $ n $ i.i.d.\ samples generated by an unknown distribution $ \mu $, and the target of learning is to obtain a neural network that minimizes \emph{expected} classification error over the i.i.d.\ samples from $ \mu $. 
Throughout the paper, we consider the training set $\mathbf{s} $ with a fixed size of $ n $. \\

We denote the learning algorithm as $\mathcal{A}$, which is a mapping from $ \mathcal{Z}^n $ to $ \mathcal{H} $. We use $\mathcal{A}:\mathbf{s}  \rightarrow  h_\mathbf{s}  $ to denote the learned hypothesis given the training set $ \mathbf{s} $.     We consider the loss function $ \ell(h,z) $ whose value is nonnegative and upper bounded by $ M $. Let $ \mathcal{L}(\cdot) $ and $ \ell_\mathrm{emp} (\cdot)$ denote the expected error and the training error for a learned hypothesis $h_\mathbf{s}$, \ie,
\begin{equation}
\label{eqn:loss_defintion}
%\begin{split}
\mathcal{L}(h_\mathbf{s})  \triangleq \mathbb{E}_{z \sim \mu} \ell(h_\mathbf{s},z),    \text{ and  }\ell_\mathrm{emp}(h_\mathbf{s})   \triangleq \frac{1}{n} \sum_{s_i \in \mathbf{s}} \ell(h_\mathbf{s},s_i).
%\end{split}
\end{equation}
We are going to characterize the generalization error $|\mathcal{L}(h_\mathbf{s}) - \ell_\mathrm{emp}(h_\mathbf{s})|$ of deep learning algorithms in the following section.

\paragraph{Randomized algorithms} Most of modern deep learning algorithms are in essence randomized ones, which map a training set $ \mathbf{s} $ to a \emph{distribution} of hypotheses $ \Delta(\mathcal{H}) $ instead of a single hypothesis. For example, running a deep learning algorithm $ \mathcal{A} $ with dropout for multiple times will produce different hypotheses which can be deemed as samples from the distribution $ \Delta(\mathcal{H}) $. This is an important observation we make for deep learning analysis in this work, and we will point out such randomness actually plays an important role for deep learning algorithms to perform well. Therefore, before proceeding to analyze the performance of deep learning,      we provide a formal definition of randomized learning algorithms here.    
\begin{definition}[Randomized Algorithms]
    A randomized  learning algorithm $ \mathcal{A} $ is a function from $ \mathcal{Z}^n  $  to %a distribution of hypothesis in $  \mathcal{H} $,
    % where $ \mathcal{R} $ is a set containing elements $ \mathbf{r} $ that model the randomization of the algorithm and is endowed with a probability measure $ \mathbb{P}_\mathbf{r} $.  
    a set of distributions of hypotheses $ \Delta(\mathcal{H}) $, which outputs a hypothesis $ h_\mathbf{s} \sim \Delta(\mathcal{H}) $  with a probability $ \pi_\mathbf{s}(h) $.
\end{definition}    
When learning with a randomized algorithm, the target is to minimize the expected empirical loss for a specific output hypothesis $h_\mathbf{s}$, similar to the ones in \eqref{eqn:loss_defintion}.    Here $ \ell $ is the loss incurred by a specific output hypothesis by one instantiation of the randomized algorithm $ \mathcal{A} $. 

%Here the learning algorithm $ \mathcal{A} $ has an internal random parameter $ \theta $ from a distribution $ \Theta $. 
Examples of the internal randomness of a deep learning algorithm $ \mathcal{A} $ include dropout rate (the parameter for a Bernoulli distribution for randomly masking certain neurons), random shuffle among training samples in SGD, the initialization of weights for different layers, to name a few.
%    
%    \begin{example}
%        Deep neural networks trained with dropout are a randomized algorithm with randomness in the dropout rate, \ie, deep neural networks is a function from $ \mathcal{Z} \times \rho \rightarrow \mathcal{H}$, where $ \rho $ is the dropout rate.
%    \end{example}

%    As aforementioned, we try to reveal the blessings of such randomness for deep neural networks, by establishing the relation between generalization and robustness offered by a distribution of output hypothesis. 

\paragraph{Robustness and generalization} \cite{xu2012robustness} established the relation between algorithmic robustness and generalization for the first time. An algorithm is robust if the following holds: if two samples are close to each other,  their associated losses are also close.  For being self-contained, we here briefly review the algorithmic robustness and its induced generalization guarantee.
\begin{definition}[Robustness, \cite{xu2012robustness}]
    \label{def:robustness}
    Algorithm $ \mathcal{A} $ is $ (K,\epsilon(\cdot)) $ robust, for $ K \in \mathbb{N} $ and $ \epsilon(\cdot):\mathcal{Z}^n \rightarrow \mathbb{R} $, if $ \mathcal{Z} $ can be partitioned into $ K $ disjoint sets, denoted by $ \{C_i\}_{i=1}^K $, such that the following holds for all $ \mathbf{s} \in \mathcal{Z}^n $:
    \begin{align*}
    &\forall s \in \mathbf{s}, \forall z \in \mathcal{Z}, \forall i = 1,\ldots,K: \\
    &\text{ if } s, z \in \mathcal{C}_i, \text{ then } \left|\ell (\mathcal{A}_\mathbf{s},s) - \ell (\mathcal{A}_\mathbf{s},z) \right| \leq \epsilon(n).
    \end{align*}
\end{definition}
Based on the above robustness property of algorithms, Xu \etal~\citep{xu2012robustness} prove that a robust algorithm also generalizes well.
%    \begin{theorem}[\citep{xu2012robustness}]
%        If a learning algorithm $ \mathcal{A} $ is $ (K,\epsilon(\cdot)) $-robust, and the training sample set $ \mathbf{s} $ is generated by $ n $ i.i.d.\ draws from $ \mu $, then for any $ \delta>0 $, with probability at least $ 1-\delta $ we have
%        \begin{equation*}
%        |\mathcal{L}(\mathcal{A}_\mathbf{s}) - \ell_\mathrm{emp}(\mathcal{A}_\mathbf{s})| \leq \epsilon(\mathbf{s}) + M\sqrt{\frac{2K\ln 2+2\ln(1/\delta)}{n}}.
%        \end{equation*}
%        \end{theorem}
Motivated by their results, Shaham~\etal~\citep{shaham2015understanding} proposed adversarial training algorithm to minimize the empirical loss over synthesized adversarial examples. However, those results cannot be applied for characterizing the performance of modern deep learning models well.
%, as \citet{goodfellow2014explaining} has shown a deep neural network is not robust to arbitrary perturbations over the training examples.% The explanation on why deep learning performs well is still open.

\section{Ensemble Robustness}

In order to explain the good performance of deep learning, one needs to understand the internal randomness of deep learning algorithms and the population performance of the multiple possible hypotheses. Intuitively, a single output hypothesis cannot be robust to adversarial perturbation on training samples and the deterministic robustness argument in \citep{xu2012robustness} cannot be applied here. Fortunately, deep learning algorithms generally output the hypothesis sampled from a distribution of hypotheses. Therefore, even if some samples are not "nice" for one specific hypothesis, they aren't likely to fail \emph{most} of the hypothesis from the produced distribution. Thus, deep learning algorithms are able to generalize well. Such intuition motivates us to introduce the concept of \emph{ensemble robustness} that is defined over the distribution of output hypotheses of a deep learning algorithm. 

\begin{definition}[Ensemble Robustness]
    \label{def:uniform}
    A randomized algorithm $ \mathcal{A} $ is $ (K, \bar{\epsilon}(n)) $  ensemble robust, for $ K \in \mathbb{N} $ and $ \epsilon(n)$, if $ \mathcal{Z} $ can be partitioned into $ K $ disjoint sets, denoted by $ \{C_i\}_{i=1}^K $, such that the following holds for \emph{all} $ \mathbf{s} \in \mathcal{Z}^n $:
    \begin{align*}
    &\forall s \in \mathbf{s}, \forall i = 1,\ldots,K: \\
    &\text{ if } s \in \mathcal{C}_i, \text{ then } \mathbb{E}_\mathcal{A} \max_{z \in \mathcal{C}_i} \left|\ell (\mathcal{A}_\mathbf{s},s) - \ell (\mathcal{A}_\mathbf{s},z) \right| \leq \bar{\epsilon}(n).
    \end{align*}
    Here the expectation is taken w.r.t.\ the internal randomness of the algorithm $\mathcal{A}$.
\end{definition}
Ensemble robustness is a ``weaker'' requirement for the model compared with the robustness proposed in~\citep{xu2012robustness} and it fits better for explaining deep learning. In the following section, we demonstrate through simulations that a deep learning model is not robust but it is indeed ensemble robust. So in practice the deep model can still achieve good generalization performance.\\

An algorithm with strong ensemble robustness can provide good generalization performance in expectation w.r.t.\ the generated hypothesis, as stated in the following theorem. We note that the proofs for all the theorems that we present in this section can be found supplementary material. In addition, the supplementary material holds an additional proof for the special case of Dropout. 

\begin{theorem}
    \label{theo:high_prob_bd}
    Let $ \mathcal{A} $ be a randomized algorithm with $ (K,\bar{\epsilon}(n)) $ ensemble robustness over the training set $ \mathbf{s} $, with  $ |\mathbf{s}|= n $.  Let  $ \Delta(\mathcal{H}) \leftarrow \mathcal{A}: \mathbf{s} $ denote the output hypothesis distribution  of $ \mathcal{A} $. Then for any $ \delta > 0 $, with probability at least $ 1-\delta $ with respect to the random draw of the $ \mathbf{s} $ and $ h \sim \Delta(\mathcal{H}) $, the following holds:
    \begin{equation*}
    |\mathcal{L}(h) - \ell_\mathrm{emp}(h) | \leq \sqrt{ \frac{nM\bar{\epsilon}(n) + 2M^2}{\delta n} }.
    \end{equation*}
\end{theorem}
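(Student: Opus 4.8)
The plan is to bound the \emph{second} moment of the generalization gap and then convert it into a high-probability statement through Markov's inequality. Concretely, writing $D \triangleq \mathcal{L}(h) - \ell_\mathrm{emp}(h)$, I would first establish the in-expectation estimate
\begin{equation*}
\mathbb{E}_{\mathbf{s}, h}\!\left[ D^2 \right] \leq M\bar{\epsilon}(n) + \frac{2M^2}{n},
\end{equation*}
where the expectation is over the random draw $\mathbf{s} \sim \mu^n$ and $h \sim \Delta(\mathcal{H})$. Applying Markov's inequality to the nonnegative variable $D^2$ at level $\mathbb{E}[D^2]/\delta$ then shows that $D^2 \leq (nM\bar{\epsilon}(n)+2M^2)/(\delta n)$ with probability at least $1-\delta$, and taking square roots gives exactly the claimed bound. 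This already explains the $1/\delta$ (rather than $\log(1/\delta)$) dependence and the absence of an explicit $K$: the result replaces the uniform-concentration argument of \cite{xu2012robustness} by a cheap moment argument.

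For the core second-moment estimate I would decompose $D$ along the partition $\{C_i\}_{i=1}^K$. Let $\kappa(z)$ denote the index of the cell containing $z$, let $c_i(h) \triangleq \mathbb{E}_{z\sim\mu}[\ell(h,z)\mid z\in C_i]$ be the cell-conditional mean loss, and let $N_i$ count the training points in $C_i$. Then $D = T_1 + T_2$ with
\begin{equation*}
T_1 \triangleq \sum_{i=1}^K \Big(\mu(C_i) - \tfrac{N_i}{n}\Big) c_i(h), \qquad T_2 \triangleq \frac{1}{n}\sum_{j=1}^n \big( c_{\kappa(s_j)}(h) - \ell(h,s_j) \big).
\end{equation*}
$T_2$ is the genuine ``robustness'' piece and is controlled by Definition \ref{def:uniform}: since $|c_{\kappa(s_j)}(h)-\ell(h,s_j)| \leq \max_{z\in C_{\kappa(s_j)}}|\ell(h,z)-\ell(h,s_j)|$, ensemble robustness gives $\mathbb{E}_{\mathcal{A}}|T_2|\leq \bar{\epsilon}(n)$ termwise. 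To pass from this first-moment bound to a second moment I would use that each summand lies in $[-M,M]$, so $T_2^2 \leq M|T_2|$ and hence $\mathbb{E}_{\mathbf{s},h}[T_2^2] \leq M\bar{\epsilon}(n)$; this produces the $M\bar{\epsilon}(n)$ term.

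$T_1$ is the sampling-fluctuation piece. Interpreting it as the deviation between the population and empirical averages of the bounded function $g_h(z) \triangleq c_{\kappa(z)}(h) \in [0,M]$, the natural estimate is $\mathbb{E}[T_1^2] \leq M^2/(4n)$, where the per-cell variances collapse to $O(1/n)$ via $\sum_i \mu(C_i)(1-\mu(C_i)) \leq 1$ --- crucially independent of $K$ --- and together with the cross term $2\,\mathbb{E}[T_1T_2]$ this yields the $2M^2/n$ contribution. The hard part is precisely this step: $g_h$ depends on $h$, which is itself a random function of the very sample $\mathbf{s}$ whose fluctuations drive $T_1$, so the $n$ terms $g_h(s_j)$ are \emph{not} i.i.d.\ and the variance bound is not automatic. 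I would address this by conditioning on the algorithm's internal randomness and decoupling the output hypothesis from the empirical average (for instance through a ghost-sample/symmetrization argument), arguing that the relevant fluctuation is still governed by the i.i.d.\ structure of $\mathbf{s}$ while keeping $K$ out of the bound. Making this decoupling rigorous --- rather than reintroducing a $\sqrt{K/n}$ factor through a union bound as in the deterministic case --- is the main obstacle.
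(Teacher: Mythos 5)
Your plan is, in essence, the paper's own proof. The paper establishes exactly your second-moment estimate as Lemma~\ref{lemma:var_bound}, namely $\mathbb{E}_{\mathbf{s}}|\mathcal{L}(h)-\ell_\mathrm{emp}(h)|^2 \leq M\bar{\epsilon}(n) + 2M^2/n$, then applies Chebyshev's inequality (i.e., Markov's on the square) conditionally on $h$ and integrates over $h\sim\Delta(\mathcal{H})$; this is your argument almost verbatim, including the explanation for the $1/\delta$ dependence and the disappearance of $K$. The only real difference is in how the second-moment lemma is proved: instead of your $D=T_1+T_2$ split, the paper expands the square of $\mathcal{L}(h)-\ell_\mathrm{emp}(h)$ directly, peels off a term $M\cdot|\mathcal{L}(h)-\mathbb{E}_{\mathbf{s}}\,\ell_\mathrm{emp}(h)|$ that it bounds by $M\bar{\epsilon}(n)$ via Definition~\ref{def:uniform} (the analogue of your $T_2^2\leq M|T_2|$ step), and charges the remaining variance-type term to $2M^2/n$. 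One advantage of the direct expansion is that it never forms the cross term $2\,\mathbb{E}[T_1T_2]$; in your split that term is of order $\sqrt{(M^2/n)\cdot M\bar{\epsilon}(n)}$, which after AM--GM inflates the coefficient of $M\bar{\epsilon}(n)$ to roughly $3/2$, so as written your accounting does not quite recover the stated constant.

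More importantly, the step you single out as the main obstacle --- that $h=\mathcal{A}_{\mathbf{s}}$ is a function of the very sample whose fluctuations drive $T_1$, so the i.i.d./multinomial variance computation is not automatic --- is not actually resolved in the paper. Lemma~\ref{lemma:var_bound} pulls $\mathcal{L}(h)$ and the cell-conditional means $\mathbb{E}_{z\sim\mu}[\ell(h,z)\mid z\in C_k]$ outside $\mathbb{E}_{\mathbf{s}}$ as if $h$ were fixed, i.e., it performs precisely the computation you hesitated to perform without a decoupling or ghost-sample argument. There is no hidden symmetrization for you to find; if you insist on rigor at that point you would have to supply the decoupling yourself (or read the lemma as a statement for $h$ independent of $\mathbf{s}$ and accept the gap when $h$ is trained on $\mathbf{s}$). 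In that sense your proposal follows the paper's route and, by being explicit about this step, exposes the softest point of the published argument rather than missing an idea it contains.
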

Note that in the above theorem, we hide the dependency of the generalization bound on $ K $ in  ensemble robustness measure $ \epsilon(n) $.  Due to space limitations, all the technical lemmas and details of the proofs throughout the paper are deferred to supplementary material. Theorem \ref{theo:high_prob_bd} leads to following corollary which gives a way to minimize expected loss directly.

\begin{corollary}
    \label{coro:min_risk}
    Let $ \mathcal{A} $ be a randomized algorithm with $ (K,\bar{\epsilon}(n)) $ ensemble robustness.
    Let $ C_1,\ldots,C_K $ be a partition of $ \mathcal{Z} $, and write $ z_1 \sim z_2 $ if $ z_1,z_2 $ fall into the same $ C_k $. If the training sample $ \mathbf{s} $ is generated by i.i.d.\ draws from $ \mu $, then with probability at least $ 1-\delta $, the following holds  over $ h \in \mathcal{H} $
    \begin{equation*}
    \mathcal{L}(h) \leq \frac{1}{n}\sum_{i=1}^n \max_{z_i \sim s_i} \ell(h,z_i) + \sqrt{ \frac{nM\bar{\epsilon}(n) + 2M^2}{\delta n} }.
    \end{equation*}
\end{corollary}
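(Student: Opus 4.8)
The plan is to obtain this corollary as an immediate consequence of Theorem~\ref{theo:high_prob_bd}, replacing the empirical loss by a pointwise upper bound that reproduces the adversarial average appearing in the statement. First I would apply Theorem~\ref{theo:high_prob_bd} under the identical ensemble-robustness hypothesis and i.i.d.\ assumption on $\mathbf{s}$: with probability at least $1-\delta$ over the joint draw of $\mathbf{s}$ and $h \sim \Delta(\mathcal{H})$, one has $\mathcal{L}(h) - \ell_\mathrm{emp}(h) \leq |\mathcal{L}(h) - \ell_\mathrm{emp}(h)| \leq \sqrt{(nM\bar{\epsilon}(n) + 2M^2)/(\delta n)}$, and hence $\mathcal{L}(h) \leq \ell_\mathrm{emp}(h) + \sqrt{(nM\bar{\epsilon}(n) + 2M^2)/(\delta n)}$.

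The second step is a deterministic pointwise inequality. The key observation is that each training sample $s_i$ belongs to exactly one cell $C_k$ of the partition, so trivially $s_i \sim s_i$; that is, $s_i$ is itself a feasible point for the inner maximization $\max_{z_i \sim s_i} \ell(h,z_i)$. Consequently $\ell(h,s_i) \leq \max_{z_i \sim s_i} \ell(h,z_i)$ for every $i$, and averaging over $i$ gives $\ell_\mathrm{emp}(h) = \frac{1}{n}\sum_{i=1}^n \ell(h,s_i) \leq \frac{1}{n}\sum_{i=1}^n \max_{z_i \sim s_i} \ell(h,z_i)$. Chaining this with the bound from the first step yields the claim on the very same $1-\delta$ event.

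Since all the probabilistic content is already carried by Theorem~\ref{theo:high_prob_bd} and the second step holds for every fixed $h$ and every fixed $\mathbf{s}$, I do not expect any genuine obstacle here. The only point that needs care — and it is purely bookkeeping — is to insist that the partition $\{C_i\}$ defining the adversarial neighborhood relation $z_i \sim s_i$ is the \emph{same} partition that witnesses the $(K,\bar{\epsilon}(n))$ ensemble robustness invoked by the theorem; once that identification is made explicit, the corollary is essentially a reformulation of the generalization bound in a form directly suited to empirical risk minimization over adversarially perturbed training examples.
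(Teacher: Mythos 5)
Your proposal is correct and matches the paper's intended derivation: the paper gives no separate proof of Corollary~\ref{coro:min_risk}, stating only that it "leads from" Theorem~\ref{theo:high_prob_bd}, and the route you describe — apply the theorem's bound and then upper-bound $\ell_\mathrm{emp}(h)$ by $\frac{1}{n}\sum_{i=1}^n \max_{z_i \sim s_i}\ell(h,z_i)$ using $s_i \sim s_i$ — is exactly that argument. Your remark that the partition defining $\sim$ must be the one witnessing ensemble robustness is the right bookkeeping point.
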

Corollary \ref{coro:min_risk} suggests that one can minimize the expected error of a deep learning algorithm effectively  through minimizing the empirical error over  the training samples $ s_i $ perturbed in an adversarial way. In fact, such an adversarial training strategy has been exploited in~\citep{goodfellow2014explaining,shaham2015understanding}. \\

\begin{theorem}
    \label{theo:variance_bound}
    Let $ \mathcal{A} $ be a randomized algorithm with $ (K,\bar{\epsilon}(n)) $ ensemble robustness over the training set $ \mathbf{s} $, where  $ |\mathbf{s}|= n $.  Let  $ \Delta(\mathcal{H}) $ denote the output hypothesis distribution  of the  algorithm $ \mathcal{A} $ on the training set $ \mathbf{s} $. 
    Suppose following variance bound holds:
    \begin{equation*}
    \mathrm{var}_\mathcal{A}\left[ \max_{z\sim s_i}|\ell(\mathcal{A}_\mathbf{s},s_i) - \ell(\mathcal{A}_\mathbf{s},z) |\right] \leq \alpha
    \end{equation*}
    Then for any $ \delta > 0 $, with probability at least $ 1-\delta $ with respect to the random draw of the $ \mathbf{s} $ and $ h \sim \Delta(\mathcal{H}) $, we have
    \begin{equation*}    
    |\mathcal{L}(\mathcal{A}_\mathbf{s}) - \ell_\mathrm{emp}(\mathcal{A}_\mathbf{s})|  \leq \bar{\epsilon}(n) + \frac{1}{\sqrt{2\delta}} \alpha + M \sqrt{\frac{2K\ln 2+ 2\ln(1/\delta)}{n}}% \\
    %    & \leq \frac{M + \sqrt{M^2 - \alpha}}{2} + \frac{1}{\sqrt{2\delta}} \alpha + M \sqrt{\frac{2K\ln 2+ 2\ln(1/\delta)}{n}}.
    \end{equation*}
\end{theorem}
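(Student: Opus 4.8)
The plan is to reduce the statement to the deterministic robustness--generalization decomposition that already underlies Xu \etal and Theorem~\ref{theo:high_prob_bd}, and then to separate the two sources of randomness---the draw of $\mathbf{s}$ and the internal randomness of $\mathcal{A}$---controlling each with a different concentration tool. First I would fix a hypothesis $h$ and establish, for every $h$ and every $\mathbf{s}$, the pointwise bound
\[
|\mathcal{L}(h) - \ell_\mathrm{emp}(h)| \;\le\; \underbrace{\frac1n\sum_{j=1}^n \max_{z\sim s_j}\bigl|\ell(h,s_j)-\ell(h,z)\bigr|}_{=:\,g(h)} \;+\; M\sum_{i=1}^K \bigl|\mu(C_i)-\hat\mu(C_i)\bigr|,
\]
where $\hat\mu(C_i)=N_i/n$ and $N_i=|\{j:s_j\in C_i\}|$. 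This is the same step as in the deterministic case: write $\mathcal{L}(h)=\sum_i \mu(C_i)\,\mathbb{E}[\ell(h,z)\mid z\in C_i]$, add and subtract $\sum_i \hat\mu(C_i)\,\mathbb{E}[\ell(h,z)\mid z\in C_i]$, bound the first resulting difference by $M\sum_i|\mu(C_i)-\hat\mu(C_i)|$ using $\ell\le M$, and bound the second by replacing each $|\ell(h,s_j)-\mathbb{E}[\ell(h,z)\mid z\in C_i]|$ (for $s_j\in C_i$) with the cell maximum. Relative to Theorem~\ref{theo:high_prob_bd}, the point is that here the two terms are handled by \emph{sharper} tools rather than folded into a single second-moment/Markov estimate.

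Second, I would control the sampling term $M\sum_i|\mu(C_i)-\hat\mu(C_i)|$, which depends only on $\mathbf{s}$. The counts $(N_1,\dots,N_K)$ are multinomial with parameters $n$ and $(\mu(C_1),\dots,\mu(C_K))$, so the Bretagnolle--Huber--Carol inequality gives $\Pr_\mathbf{s}\!\bigl[\sum_i|\hat\mu(C_i)-\mu(C_i)|\ge\lambda\bigr]\le 2^K e^{-n\lambda^2/2}$. Setting the right-hand side equal to $\delta$ and solving for $\lambda$ yields exactly $\lambda=\sqrt{(2K\ln2+2\ln(1/\delta))/n}$, so with probability at least $1-\delta$ over $\mathbf{s}$ this term is at most $M\sqrt{(2K\ln2+2\ln(1/\delta))/n}$, which is the third summand of the claim.

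Third, the term $g(\mathcal{A}_\mathbf{s})$ is where ensemble robustness and the variance hypothesis enter. By Definition~\ref{def:uniform}, $\mathbb{E}_\mathcal{A}[g(\mathcal{A}_\mathbf{s})]=\frac1n\sum_j \mathbb{E}_\mathcal{A}\max_{z\sim s_j}|\ell(\mathcal{A}_\mathbf{s},s_j)-\ell(\mathcal{A}_\mathbf{s},z)|\le\bar\epsilon(n)$. To pass from this expectation to a high-probability statement I would bound $\mathrm{var}_\mathcal{A}[g]$ and apply Chebyshev. Writing $g=\frac1n\sum_j f_j$ with $f_j=\max_{z\sim s_j}|\ell(\mathcal{A}_\mathbf{s},s_j)-\ell(\mathcal{A}_\mathbf{s},z)|$, each $f_j$ satisfies $\mathrm{var}_\mathcal{A}[f_j]\le\alpha$ by hypothesis, and Minkowski's inequality for the $L^2(\mathcal{A})$ norm gives $\sqrt{\mathrm{var}_\mathcal{A}[g]}\le\frac1n\sum_j\sqrt{\mathrm{var}_\mathcal{A}[f_j]}\le\sqrt\alpha$. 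Chebyshev then yields $g(\mathcal{A}_\mathbf{s})\le\bar\epsilon(n)+\sqrt{\mathrm{var}_\mathcal{A}[g]/\delta}$ with probability at least $1-\delta$ over $\mathcal{A}$, which is the middle summand $\tfrac{1}{\sqrt{2\delta}}\alpha$. A union bound over the (disjoint) randomness of $\mathbf{s}$ and $\mathcal{A}$ then assembles the three terms into the stated inequality.

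The main obstacle is the variance step. Because all the per-sample quantities $f_j$ are functions of the \emph{single} random hypothesis $\mathcal{A}_\mathbf{s}$, they are strongly dependent, so $\mathrm{var}_\mathcal{A}[g]$ does not contract like $1/n$ as it would for independent summands; the Minkowski/covariance bound above, giving $\mathrm{var}_\mathcal{A}[g]\le\alpha$, is essentially the best available, which is why the $\alpha$-term carries no $1/\sqrt n$ gain and must be accepted as an $O(1)$-in-$n$ contribution. A secondary difficulty is the bookkeeping of the confidence level: the sampling event and the algorithmic event must be combined so the total failure probability is $\delta$, so the precise constants will require splitting $\delta$ between the two events, and matching the stated factor $\tfrac{1}{\sqrt{2\delta}}\alpha$ exactly appears to require reading the variance hypothesis on a standard-deviation scale (so that $\mathrm{var}_\mathcal{A}[g]\le\alpha^2/2$); I would verify this normalization and absorb the resulting constants when finalizing the bound.
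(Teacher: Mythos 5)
Your proposal follows essentially the same route as the paper's own proof: the same cell-wise decomposition into a robustness term plus a multinomial deviation term, Bretagnolle--Huber--Carol for the latter, Chebyshev over the algorithm's randomness for the former, and a union bound. The one place you are actually more careful than the paper is the variance of the averaged term $g$: the paper applies its single-sample Chebyshev bound and silently extends it to the average, whereas your Minkowski step $\sqrt{\mathrm{var}_\mathcal{A}[g]}\le\frac1n\sum_j\sqrt{\mathrm{var}_\mathcal{A}[f_j]}$ makes that extension explicit. The normalization issue you flag is real and is present in the paper itself: with $\mathrm{var}\le\alpha$, Chebyshev yields a deviation of order $\sqrt{\alpha/\delta}$, so the stated term $\frac{1}{\sqrt{2\delta}}\alpha$ only follows if $\alpha$ is read as a bound on the standard deviation (and even then the paper's substitution $\delta\mapsto 2\delta$ versus $\delta\mapsto\delta/2$ in the union bound does not produce exactly the advertised constants); your plan to track this normalization explicitly is the right fix.
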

Theorem \ref{theo:variance_bound} suggests that controlling the variance of the deep learning model can substantially improve the generalization performance. Note that here we need to consider the trade-off between the expectation and variance of  ensemble robustness. To see this, consider following two extreme examples. When $ \alpha = 0 $, we do not allow any variance in the output of the algorithm $ \mathcal{A} $. Thus, $ \mathcal{A} $ reduces to a deterministic one. To achieve the above upper bound, it is required that the output hypothesis satisfies $ \max_{z\in C_i }|\ell(h,s_i) - \ell(h,z)| \leq \epsilon(n) $. However, due to the intriguing property of deep neural networks \citep{szegedy2013intriguing}, the deterministic model robustness measure $ \epsilon(n) $ (ref. Definition \ref{def:robustness}) is usually large. In contrast, when the hypotheses variance $ \alpha  $ can be large enough, there are multiple possible output hypotheses from the distribution $ \Delta(\mathcal{H}) $. We fix the partition of $ \mathcal{Z} $ as $ C_1,\ldots, C_K $. Then, 
\begin{align*}
& \mathbb{E}_{\mathcal{A}}[\max_{z\sim s \in \mathbf{s}\cap C_i} |\ell(h,s) - \ell(h,z)|]  \\
& = \sum_{j\in \Delta (\mathcal{H}) } \mathbb{P}\{h=h_j\} \max_{z\sim s \in \mathbf{s}\cap C_i} |\ell(h_j,s) - \ell(h_j,z)| \\
& \leq \sum_{j\in \Delta (\mathcal{H})} \mathbb{P}\{h=h_j\} \max_{z\sim s \in \mathbf{s}\cap C_i} \max_{h \in \Delta \mathcal{H} } |\ell(h,s) - \ell(h,z)| \\
& \leq \max_{z\sim s \in \mathbf{s}\cap C_i} \max_{h \in \Delta (\mathcal{H}) } |\ell(h,s) - \ell(h,z)|.
\end{align*}
Therefore, allowing certain variance on produced hypotheses, a randomized algorithm  can tolerate the non-robustness of some hypotheses  to certain samples. As long as the ensemble robustness is small, the algorithm can still perform well. 

\section{Simulations}
This section is devoted to simulations for quantitatively and qualitatively demonstrating how ensemble robustness of a deep learning method explains its performance. We first introduce our experiment settings and implementation details.

    \subsection{Experiment Settings}
    \label{subsec:details}
    \paragraph{Data sets} We conduct simulations on two benchmarks. MNIST, a dataset of handwritten digit images (28x28) with $50{,}000 $ training samples and $10{,}000 $ test samples~\citep{lecun1998mnist}. NotMNIST\footnote{http://yaroslavvb.blogspot.com/2011/09/notmnist-dataset.html}, a "mnist like database" containing font glyphs for the letters A through J (10 classes). The training set contains $367{,}440 $ samples and $18{,}724$ testing examples. The images (for both data sets) were scaled such that each pixel is in the range $[0,1]$. We note that we did not use the cross-validation data.
        
    \paragraph{Network architecture and parameter setting} Without explicit explanation, we use multi-layer perceptrons throughout the simulations. All networks we examined are composed of three fully connected layers, each of which is followed by a rectified linear unit on top. The output of the last fully-connected layer is fed to a $10$-way softmax. In order to avoid the bias brought by specific network architecture on our observations, we sample at random the number of units in each layer (uniformly over $\{400, 800,1200\}$ units) and the learning rate (uniformly over $[0.005,0.05]$). Finally, we used a mini-batch of $128$ training examples at a time.

\paragraph{Compared algorithms} We evaluate and compare ensemble robustness as well as the generalization performance for following $4$ deep learning algorithms. (1) \textbf{Explicit ensembles}, i.e., using a stochastic algorithm to train different members in the ensemble by running the algorithm multiple times with different seeds. In practice, this was implemented using SGD as the stochastic algorithm, trained to minimize the cross-entropy loss. (2) \textbf{Implicit ensembles}, i.e., learning a probability distribution on the weights of a neural network and sampling ensemble members from it. This was implemented with the \textit{Bayes-by-backprop} \citep{blundell2015weight} algorithm, a recent approach for training Bayesian Neural Networks. It uses backpropagation to learn a probability distribution on the weights of a neural network by minimising the expected lower bound on the marginal likelihood (or the variational free energy).  Methods 3 and 4 correspond for adding adversarial training \citep{szegedy2013intriguing, goodfellow2014explaining, shaham2015understanding} to the ensemble methods, where the magnitude of perturbation is measured by its $\ell_2$ norm and is sampled uniformly over $\{0.1, 0.3,0.5\}$ to avoid sampling bias. From now on, a specific configuration will refer to a unique set of these parameters (algorithm type, network width, learning rate and perturbation norm).   

\subsection{Empirical Ensemble Robustness and Generalization}

We now present simulations that empirically validate Theorem \ref{theo:high_prob_bd}, \ie, that the ensemble robustness of a DNN (measured on the training set) is highly correlated with its generalization performance. As ensemble robustness involves taking an expectation over all the possible output hypothesis, it is computationally intractable to exactly measure ensemble robustness for different deep learning algorithms. In this simulation, we take the empirical average of robustness to adversarial perturbation from $5$ different hypotheses of the same learning algorithm as its ensemble robustness. In the case of the SGD variants, for each configuration, we collect an ensemble of output hypotheses by repeating the training procedures using the same configuration while using different random seeds. In the case of the Bayes-by-backprop methods, the algorithm explicitly outputs a distribution over output hypothesis, so we simply sample the networks from the learned weight distribution. 

\begin{figure}[h]
\centering
\includegraphics[width=\linewidth]{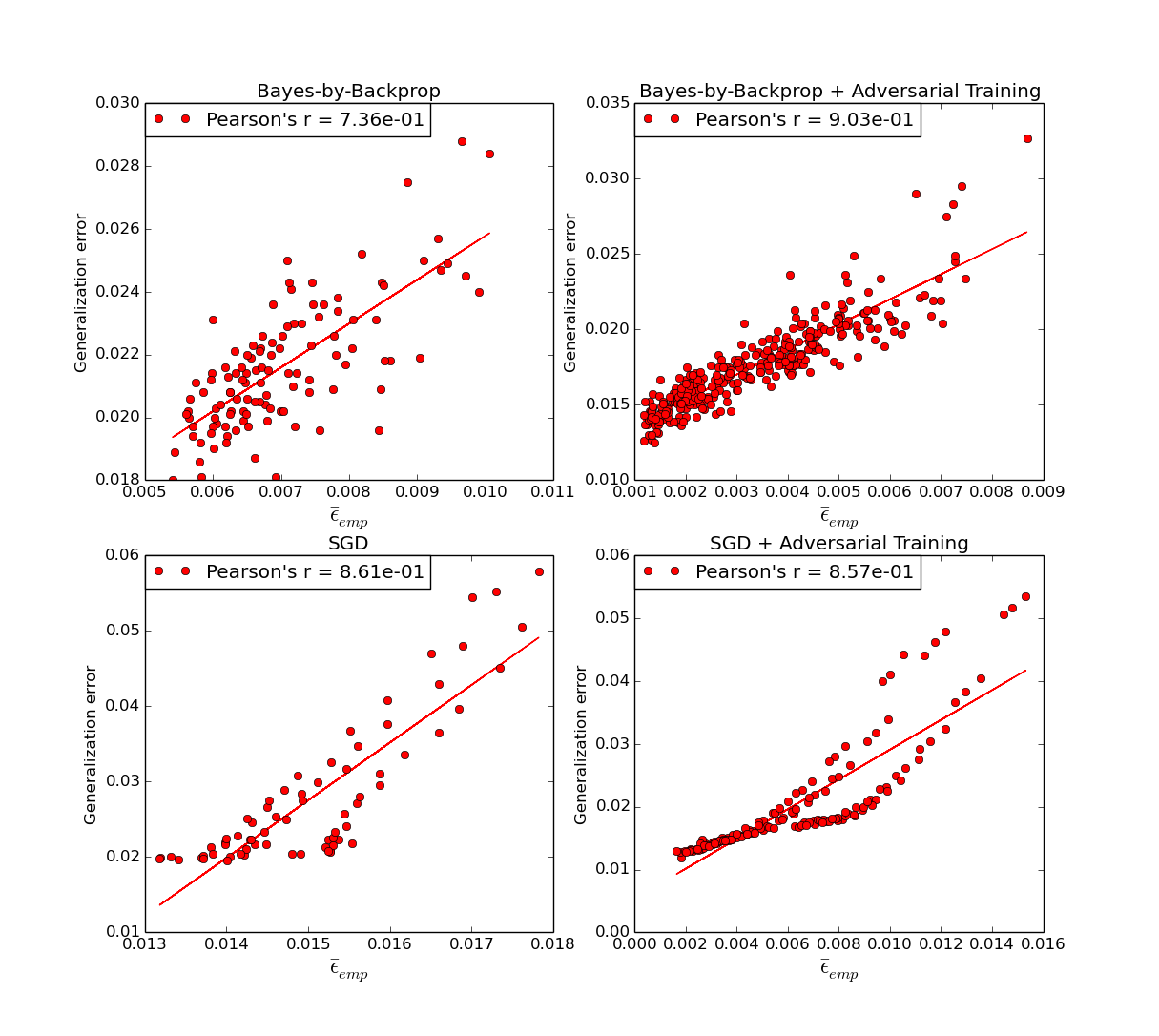}
\caption{Results for \textbf{MNIST}. Empirical ensemble robustness $ \bar{\epsilon}_\mathrm{emp} $   (x-axis) vs generalization error (y-axis). Results are given for four different deep learning algorithms. }
\label{fig:er_generalization}
\end{figure}
In particular, we aim to empirically demonstrate that a deep learning algorithm with stronger ensemble robustness presents better generalization performance (Theorem \ref{theo:high_prob_bd}). Recall the definition of ensemble robustness in Definition \ref{def:uniform}, another obstacle in calculating ensemble robustness is to find the most adversarial perturbation $ \Delta s $ (or equivalently the most adversarial example $z=s+\Delta s$) for a specific  training sample $ s \in \mathbf{s} $ within a partition set $C_i$. We therefore employ an approximate search strategy for finding the adversarial examples. More concretely, we optimize the following first-order Taylor expansion of the loss function as a surrogate for finding the adversarial example:
\begin{equation}
\label{eqn:syn_adversarial}
\Delta s_i \in \underset{ \|\Delta s_i \| \leq r }{\arg\max}  \quad \ell(s_i) + \langle \nabla \ell_{s_i}(s), \Delta s_i \rangle,
\end{equation}
with a pre-defined magnitude constraint $r$ on the perturbation $\Delta s_i$. In the simulations, we vary the magnitude $r$ in order to calculate the empirical ensemble robustness at different perturbation levels. \\

We then calculate the \emph{empirical} ensemble robustness by averaging the difference between the loss of the algorithm on the training samples and the  adversarial samples output by the method in \eqref{eqn:syn_adversarial}:
\begin{equation}
\label{eqn:ensemble_robustness}
\bar{\epsilon}_\mathrm{emp} =  \frac{1}{T} \sum_{t=1}^T \max_{i\in\{1,\ldots,n\}}  |\ell(\mathcal{A}^{(t)}_\mathbf{s},s_i) - \ell(\mathcal{A}^{(t)}_\mathbf{s},s_i+\Delta s_i)|,
\end{equation}
with $T=5$ denoting the size of the ensemble. \\

We emphasize that $\bar{\epsilon}(n)$ (Theorem \ref{theo:high_prob_bd}) and the empirical approximation $\bar{\epsilon}(n)_{emp}$ measure the \textbf{non robustness} of an algorithm, \ie, an algorithm is more robust if $\bar{\epsilon}(n)$ is smaller. 

\begin{figure}[h]
\centering
\includegraphics[width=\linewidth]{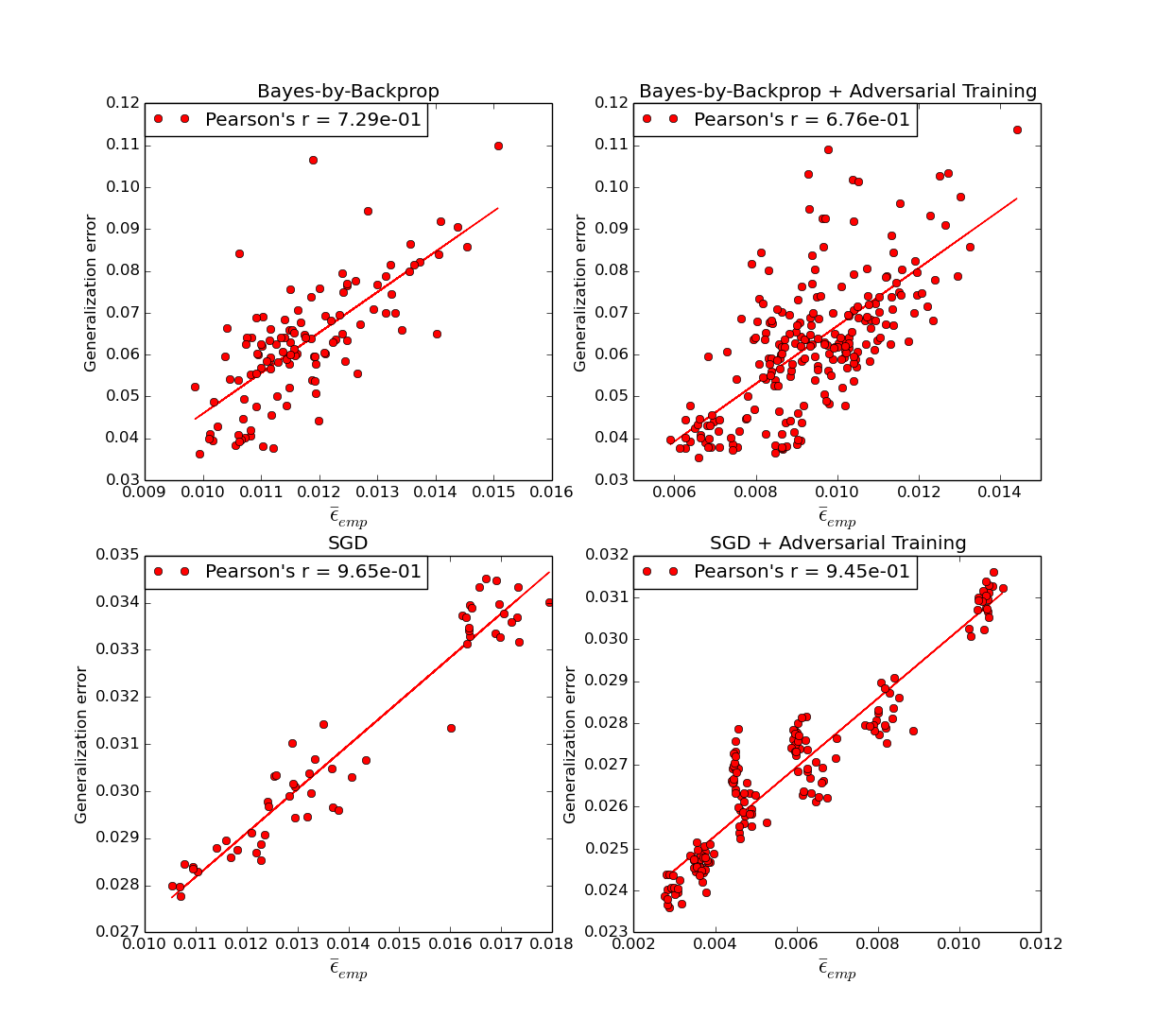}
 \caption{Results for \textbf{notMNIST}. Empirical ensemble robustness $ \bar{\epsilon}_\mathrm{emp} $   (x-axis) vs generalization error (y-axis). Results are given for four different deep learning algorithms. }
 \label{fig:notmnist}
\end{figure}

\subsection{Results}
The generalization performance of different learning algorithms and different networks compared with the empirical ensemble robustness on MNIST is given in Figure \ref{fig:er_generalization}. Notice that the x-axis corresponds to the \emph{empirical} ensemble robustness (Equation \ref{eqn:ensemble_robustness}), and the y-axis corresponds to the test error. Examining Figure \ref{fig:er_generalization} we observe a high correlation between ensemble robustness and generalization for all learning algorithms, \ie, algorithms that are more robust (have lower $\bar{\epsilon}(n)$) generalize better on this data set.\\

In addition, adversarial training methods consistently present stronger ensemble robustness (smaller $\bar{\epsilon}$) than pure SGD or Bayes-by-backprop. Figure \ref{fig:notmnist} presents similar results on the notMNIST dataset, although we observe lower (yet positive) correlation for the Bayes-by-backprop algorithm in this case. These observations support our claim on the relation between ensemble robustness and algorithm generalization performance in Theorem~\ref{theo:high_prob_bd}. \\

We also compare ensemble robustness with robustness on MNIST in Table \ref{tab:comparision_robustness}, where robustness is measured similarly to ensemble robustness using Equation \ref{eqn:ensemble_robustness} but with $T=1$ (while $T=5$ for ensemble robustness). Indeed, we observe that averaging over instances of the same algorithm, exhibits a higher correlation between generalization and robustness, i.e., ensemble robustness is a better estimation for the generalization performance than standard robustness.  

\bgroup
\def\arraystretch{1.5}%  1 is the default, change whatever you need

\begin{table}[h]
    \centering
    \begin{tabular}{p{35mm}|l|p{20mm}}

         \multicolumn{1}{l|}{Data set.}  & \multicolumn{2}{c}{MNIST}    \\
        \hline
        Metric & Robustness & Ensemble \newline Robustness   \\
        \hline
        \hline
        SGD & $0.836$ & $0.861 $ \\
        \hline
        SGD + \newline adversarial training &$0.852$ & $0.857$ \\
        \hline
        Bayes-by-backprop &$0.637$ & $ 0.736 $ \\
        \hline
        Bayes-by-backprop + \newline adversarial training &$0.842 $ & $ 0.903 $ \\
    \end{tabular}
        \caption{Empirical robustness vs. ensemble robustness.}
    \label{tab:comparision_robustness}
\end{table}
\egroup

\section{Conclusions}
In this paper, we investigated the generalization ability of stochastic deep learning algorithm based on their ensemble robustness; \ie, the property that if a testing sample is “similar” to a training sample, then its loss is close to the training error. We established both theoretically and experimentally evidence that ensemble robustness of an algorithm, measured on the training set, indicates its generalization performance well. Moreover, our theory and experiments suggest that DNNs may be robust (and generalize) while being fragile to specific adversarial examples. Measuring ensemble robustness of stochastic deep learning algorithms may be computationally prohibitive as one needs to sample several output hypotheses of the algorithm. Thus, we demonstrated that by learning the probability distribution of the weights of a neural network explicitly, e.g., via variational methods such as Bayes-by-backprop, we can still observe a positive correlation between robustness and generalization while using fewer computations, making ensemble robustness feasible to measure. \\

As a direct consequence, one can potentially measure the generalization error of an algorithm without using testing examples. In future work, we plan to further investigate if ensemble robustness can be used for model selection instead of cross-validation (and hence, increasing the training set size), in particular in problems that have a small training set. A different direction is to study the resilience of deep learning methods to adversarial attacks \citep{papernot2016distillation}. \cite{strauss2017ensemble} recently showed that ensemble methods are useful as a mean to defense against adversarial attacks. However, they only considered implicit ensemble methods which are computationally prohibitive. As our simulations show that explicit ensembles are robust as well, we believe that they are likely to be a useful defense strategy while reducing computational cost.  Finally, Theorem \ref{theo:variance_bound} suggests that a randomized algorithm can tolerate the non-robustness of some hypotheses to certain samples; this may help to explain Proposition 1 in \cite{kawaguchi2017generalization}: "For any dataset, there exist arbitrarily unstable non-robust algorithms such that has a small generalization gap". We leave this intuition for future work.
 
 \newpage
\bibliography{ensemble}
\bibliographystyle{iclr2018_conference}

 \newpage
 \begin{center}
 \Huge{Supplementary material}
 \end{center}
\vspace{1cm}
\section{Understanding Dropout via Ensemble Robustness}
In this section, we illustrate how  ensemble robustness can well characterize the performance of various training strategies of deep learning. In particular, we take the dropout as a concrete example.  
%We also devise a novel training algorithm for deep learning based on the adversarial training strategy suggested in Corollary  \ref{coro:min_risk}.

%\subsection{Dropout Training for Deep Neural Networks}
Dropout is a widely used technique for optimizing deep neural network models. We demonstrate that dropout is a random scheme to perturb the algorithm. During dropout, at each step, a random fraction  of the units are masked out in a round of parameter updating. 
%That is, the data point $ s $ is perturbed to $ b \odot s \in \mathbb{R}^p $ where the $ i $-th coordinate of $ b \odot s  $ is given by $ b_i s_i $. Now, the perturbed $ b \odot s  $ is used to update the parameter.

\begin{assumpiton}
	We assume the randomness of the algorithm $ \mathcal{A} $ is parametrized by $ \mathbf{r}=(\mathbf{r}_1,\ldots,\mathbf{r}_L) \in \mathcal{R} $ where $ \mathbf{r}_l,l=1,\ldots,L $ are random elements drawn independently.
\end{assumpiton}

For a deep neural network consisting of $ L $ layers, the random variable $ \mathbf{r}_l $ is the dropout randomness for the $ l $-th layer. The next theorem establishes the generalization performance for the neural network with dropout training. 

\begin{theorem}[Generalization of Dropout Training]
	\label{theo:dropout}
	Consider an $ L $-layer neural network trained by dropout.
	Let $ \mathcal{A} $ be an algorithm with $ (K, \bar{\epsilon}(n)) $ ensemble robustness.  Let  $ \Delta(\mathcal{H}) $ denote the output hypothesis distribution  of the randomized algorithm $ \mathcal{A} $ on a training set $ \mathbf{s} $. 
	Assume there exists a $ \beta>0 $ such that,
	\begin{equation*}
	\sup_{\mathbf{r},\mathbf{t}} \sup_{z\in\mathcal{Z}} |\ell(\mathcal{A}_{\mathbf{s},\mathbf{r}},z ) - \ell(\mathcal{A}_{\mathbf{s},\mathbf{t}},z )| \leq \beta \leq L^{-3/4},
	\end{equation*}
	with $ \mathbf{r} $ and $ \mathbf{t} $ only differing in one element.
	Then for any $ \delta > 0 $, with probability at least $ 1-\delta $ with respect to the random draw of the $ \mathbf{s} $ and $ h \sim \Delta(\mathcal{H}) $,
	\begin{equation*}
	\mathcal{L}(h_{\mathbf{s},\mathbf{r}}) - \ell_\mathrm{emp} (h_{\mathbf{s},\mathbf{r}}) 
	\leq  \bar{\epsilon}(n) + \sqrt{2\log(1/\delta)/L}   + \sqrt{\frac{2K \ln 2 + 2 \ln(2/\delta)}{n}}.
	\end{equation*}
\end{theorem}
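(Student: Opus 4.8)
The plan is to reduce the statement to the deterministic robustness argument of \cite{xu2012robustness} applied \emph{conditionally} on a single realization of the dropout randomness $\mathbf{r}$, and then to control the two remaining sources of randomness — the i.i.d.\ draw of the training set $\mathbf{s}$ and the draw of $\mathbf{r}$ — by two separate concentration arguments that are finally merged through a union bound. Concretely, fix the partition $\{C_i\}_{i=1}^K$ and write $C_{(j)}$ for the cell containing $s_j$. For the fixed hypothesis $h_{\mathbf{s},\mathbf{r}}$ the Xu--Mannor decomposition splits the gap into a within-cell term and an empirical-measure-deviation term:
\begin{equation*}
\mathcal{L}(h_{\mathbf{s},\mathbf{r}}) - \ell_\mathrm{emp}(h_{\mathbf{s},\mathbf{r}}) \le \underbrace{\frac{1}{n}\sum_{j=1}^n \max_{z\in C_{(j)}} \left|\ell(h_{\mathbf{s},\mathbf{r}},s_j) - \ell(h_{\mathbf{s},\mathbf{r}},z)\right|}_{=:\psi(\mathbf{r})} + M\sum_{i=1}^K \left|\tfrac{N_i}{n} - \mu(C_i)\right|,
\end{equation*}
where $N_i = |\{j: s_j \in C_i\}|$. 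I would handle the first term with the dropout randomness and the second with the data randomness.

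First I would dispatch the empirical-measure term. The vector $(N_1/n,\ldots,N_K/n)$ records the empirical frequencies of a multinomial over the $K$ cells, so a breaking-into-cells (Bretagnolle--Huber--Carol) concentration inequality gives $\sum_i |N_i/n - \mu(C_i)| \le \sqrt{(2K\ln 2 + 2\ln(2/\delta))/n}$ with probability at least $1-\delta/2$ over $\mathbf{s}$; this reproduces the third summand (up to the loss scale $M$, exactly as in the last term of Theorem~\ref{theo:variance_bound}). Next I would treat $\psi$ as a function of the independent dropout variables $\mathbf{r}=(\mathbf{r}_1,\ldots,\mathbf{r}_L)$ and concentrate it via the bounded-differences (McDiarmid) inequality. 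The stability hypothesis $\sup_{z}|\ell(\mathcal{A}_{\mathbf{s},\mathbf{r}},z)-\ell(\mathcal{A}_{\mathbf{s},\mathbf{t}},z)|\le\beta$ for $\mathbf{r},\mathbf{t}$ differing in one coordinate implies that each summand $\max_{z\in C_{(j)}}|\ell(h_{\mathbf{s},\mathbf{r}},s_j)-\ell(h_{\mathbf{s},\mathbf{r}},z)|$, and hence $\psi$, changes by at most a constant multiple of $\beta$ when a single $\mathbf{r}_l$ is resampled. McDiarmid over the $L$ coordinates then yields, with probability at least $1-\delta/2$ over $\mathbf{r}$, a deviation $\psi(\mathbf{r}) \le \mathbb{E}_{\mathbf{r}}\psi(\mathbf{r}) + O\!\left(\beta\sqrt{L\ln(1/\delta)}\right)$, and the assumed decay $\beta \le L^{-3/4}$ collapses this excess into the second summand $\sqrt{2\ln(1/\delta)/L}$.

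To finish I would bound the expectation by ensemble robustness: since the internal randomness of $\mathcal{A}$ is precisely $\mathbf{r}$, Definition~\ref{def:uniform} gives $\mathbb{E}_{\mathbf{r}}\psi(\mathbf{r}) = \frac{1}{n}\sum_j \mathbb{E}_\mathcal{A}\max_{z\in C_{(j)}}|\ell(\mathcal{A}_\mathbf{s},s_j)-\ell(\mathcal{A}_\mathbf{s},z)| \le \bar{\epsilon}(n)$, which is the first summand. A union bound intersecting the two $\delta/2$-events then produces the stated inequality.

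The main obstacle I anticipate is the bookkeeping that converts the per-layer stability constant $\beta$ into the advertised $L$-rate: McDiarmid with a bounded difference of order $\beta$ per coordinate produces a deviation of order $\beta\sqrt{L\ln(1/\delta)}$, so reaching $\sqrt{2\ln(1/\delta)/L}$ requires tracking the interplay between $\beta$, $L$ and the absolute constant in the bounded-difference estimate carefully, with the hypothesis $\beta \le L^{-3/4}$ serving precisely to force this term to vanish in the depth $L$. A secondary subtlety is that the two concentration statements live on different probability spaces — the i.i.d.\ draw of $\mathbf{s}$ and the internal randomness $\mathbf{r}$ — so I would need to check that the McDiarmid bound holds uniformly over $\mathbf{s}$ (which it does, because both the $\beta$-stability condition and the ensemble-robustness bound $\bar{\epsilon}(n)$ hold for \emph{all} $\mathbf{s}$), and that the final ``with probability $1-\delta$ with respect to both'' claim is legitimately obtained by intersecting the two events on the product measure via Fubini.
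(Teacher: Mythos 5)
Your proposal is correct and follows essentially the same route as the paper's proof: McDiarmid's bounded-difference inequality over the $L$ independent dropout coordinates with per-coordinate constant $2\beta$, the ensemble-robustness definition to bound the expectation over $\mathbf{r}$, the Bretagnolle--Huber--Carol inequality for the multinomial cell-frequency term, and a final union bound with $\delta\mapsto\delta/2$; the only cosmetic difference is that you apply McDiarmid to the within-cell term $\psi(\mathbf{r})$ after the Xu--Mannor decomposition, whereas the paper applies it to the whole gap $R(\mathbf{s},\mathbf{r})=\mathcal{L}(\mathcal{A}_{\mathbf{s},\mathbf{r}})-\ell_{\mathrm{emp}}(\mathcal{A}_{\mathbf{s},\mathbf{r}})$ and then invokes its Lemma~\ref{lemma:robustness} to control $\mathbb{E}_{\mathbf{r}}R$. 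The exponent bookkeeping you flag is real but is inherited from the paper itself: the McDiarmid deviation $\beta\sqrt{2L\log(1/\delta)}$ under $\beta\le L^{-3/4}$ yields $L^{-1/4}\sqrt{2\log(1/\delta)}$, not the stated $\sqrt{2\log(1/\delta)/L}$, so your derivation matches the paper's proof rather than introducing a new gap.
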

Theorem \ref{theo:dropout} also establishes the relation between the depth of a neural network model and the generalization performance. It suggests that when using dropout training, controlling the variance $\beta$ of the empirical performance over different runs is important: when $ \beta $ converges  at the rate of $ L^{-3/4}$, increasing the layer number $ L $ will improve the performance of a deep neural network model. However, simply making $ L $ larger without controlling $ \beta $ does not help. Therefore, in practice, we usually use voting from multiple models to reduce the variance and thus decrease the generalization error \citep{hinton2015distilling}. Also, when dropout training is applied for  more layers in a neural network model, smaller variance of the model performance is preferred. This can be compensated by increasing the size of training examples or ensemble of multiple models.

	\section{Technical Lemmas}
\begin{lemma}
	\label{lemma:robustness}
	For a randomized learning algorithm $ \mathcal{A} $ with $ (K,\bar{\epsilon}(n)) $ uniform ensemble robustness,  and loss function $ \ell $ such that $ 0 \leq \ell(h,z) \leq M $, we have,
	\begin{equation*}
	\mathbb{P}_{\mathbf{s}} \left\{  \mathbb{E}_\mathcal{A}  |\mathcal{L}(h) - \ell_\mathrm{emp}(h) | \leq  \bar{\epsilon}(n)  + M \sqrt{\frac{2K \ln 2 + 2 \ln (1/\delta )}{n}}     \right\} \geq 1- \delta,
	\end{equation*}
	where we use $ \mathbb{P}_\mathbf{s} $ to denote the  probability w.r.t. the choice of $ \mathbf{s} $, and $ |\mathbf{s}|=n $.
\end{lemma}
\begin{proof}
	Given a random choice of training set $ \mathbf{s} $ with cardinality of $ n $, let $ N_i $ be the set of index of points of $ \mathbf{s} $ that fall into the $ C_i $. Note that $ (|N_1|,\ldots,|N_K|) $ is an i.i.d.\ multinomial random variable with parameters $ n $ and $ (\mu(C_1),\ldots,\mu(C_K)) $.	The following holds by the Breteganolle-Huber-Carol inequality:
	\begin{equation*}
	\mathbb{P}_\mathbf{s} \left\{ \sum_{i=1}^K \left|\frac{|N_i|}{n} - \mu(C_i)\right| \geq \lambda   \right\} \leq 2^K\exp\left(\frac{-n\lambda^2}{2}\right).
	\end{equation*}	
	We have
	\begin{align}
	\label{eqn:gen_bd_expect}
	&  \mathbb{E}_{\mathcal{A}} |\mathcal{L}(\mathcal{A}_\mathbf{s}) - \ell_\mathrm{emp}(\mathcal{A}_\mathbf{s}) | \nonumber \\
	&= \mathbb{E}_\mathcal{A} \left| \sum_{i=1}^K \mathbb{E}_{z \sim \mu} (\ell(\mathcal{A}_\mathbf{s},z)|z\in C_i) \mu(C_i) - \frac{1}{n}\sum_{i=1}^n \ell(\mathcal{A}_\mathbf{s},s_i)\right| \nonumber \\
	&\overset{(a)}{\leq} \mathbb{E}_\mathcal{A} \left| \sum_{i=1}^K \mathbb{E}_{z \sim \mu} (\ell(\mathcal{A}_\mathbf{s},z)|z\in C_i) \frac{|N_i|}{n} - \frac{1}{n}\sum_{i=1}^n \ell(\mathcal{A}_\mathbf{s},s_i)\right| \nonumber \\
	& \qquad +  \mathbb{E}_\mathcal{A} \left| \sum_{i=1}^K \mathbb{E}_{z\sim \mu}  (\ell(\mathcal{A}_\mathbf{s},z)|z\in C_i)\mu(C_i) - \sum_{i=1}^K \mathbb{E}_{z\sim \mu } (\ell(\mathcal{A}_\mathbf{s},z)|z\in C_i)\frac{|N_i|}{n}\right| \nonumber \\
	& \overset{(b)}{\leq} \sum_{i=1}^K \mathbb{E}_\mathcal{A} \left|  \mathbb{E}_{z \sim \mu} (\ell(\mathcal{A}_\mathbf{s},z)|z\in C_i) \frac{|N_i|}{n} - \frac{1}{n}\sum_{j \in N_i} \ell(\mathcal{A}_\mathbf{s},s_j)\right| \nonumber \\
	& \qquad +  \mathbb{E}_\mathcal{A} \left| \sum_{i=1}^K \mathbb{E}_{z\sim \mu}  (\ell(\mathcal{A}_\mathbf{s},z)|z\in C_i)\mu(C_i) - \sum_{i=1}^K \mathbb{E}_{z\sim \mu } (\ell(\mathcal{A}_\mathbf{s},z)|z\in C_i)\frac{|N_i|}{n}\right| \nonumber \\
	& \leq \frac{1}{n}\sum_{i=1}^K \sum_{j\in N_i} \mathbb{E}_\mathcal{A} \left(\max_{z \in C_i}|\ell(\mathcal{A}_\mathbf{s},s_j) - \ell(\mathcal{A}_\mathbf{s},z)|\right) + \max_{z\in \mathcal{Z}} |\ell(\mathcal{A}_\mathbf{s},z)| \sum_{i=1}^K \left|\frac{|N_i|}{n} - \mu(C_i)\right| \\
	& \overset{(c)}{\leq} \bar{\epsilon}(n) +  M \sum_{i=1}^K \left|\frac{|N_i|}{n} - \mu(C_i)\right| \nonumber \\
	& \overset{(d)}{\leq}  \bar{\epsilon}(n) +  M \sqrt{\frac{2K\ln 2 + 2\ln(1/\delta)}{n}} %\text{ (w.p. } \geq 1-\delta \text{)}.
	\end{align}
	Here the inequalities $ (a) $ and $(b)$ are due to triangle inequality, $(c)$ is from the definition of ensemble robustness and the fact that the loss function is upper bounded by $ M $, and  $ (d) $ holds with a probability greater than $ 1-\delta $. 
\end{proof}

\begin{lemma}
	\label{lemma:var_bound}
	For a randomized learning algorithm $ \mathcal{A} $ with $ (K,\bar{\epsilon}(n)) $ uniform ensemble robustness, and loss function $ \ell $ such that $ 0 \leq \ell(h,z) \leq M $, we have,
	\begin{equation*}
	\mathbb{E}_{\mathbf{s}}  |\mathcal{L}(h) - \ell_\mathrm{emp}(h) |^2 \leq M \bar{\epsilon}(n) + \frac{2M^2}{n} .%=  M  \mathbb{E}_\mathbf{s} \max_{s\in \mathbf{s}, z\sim s} |\ell(h,s) - \ell(h,z)| + \frac{2M^2}{n} .  
	\end{equation*}
\end{lemma}
\begin{proof}
	Let $ N_i $ be the set of index of points of $ \mathbf{s} $ that fall into the $ C_i $. Note that $ (|N_1|,\ldots,|N_K|) $ is an i.i.d.\ multinomial random variable with parameters $ n $ and $ (\mu(C_1),\ldots,\mu(C_K) $. Then $ \mathbb{E}_\mathbf{s} |N_k| = n\cdot \mu(C_k) $ for $ k=1,\ldots,K $.
	\begin{align*}
	&\mathbb{E}_\mathbf{s}|\mathcal{L}(h) - \ell_\mathrm{emp}(h)|^2 \\
	& = \mathbb{E}_\mathbf{s}\left|\mathbb{E}_{z\in\mathcal{Z}} \ell(h,z) - \frac{1}{n} \sum_{i=1}^n \ell(h,s_i) \right|^2 \\
	& = \mathbb{E}_\mathbf{s} \left| \sum_{k=1}^K \mathbb{E}_{z\in \mathcal{Z}} \ell(h,z|z \in C_k) \mu(C_k) - \frac{1}{n} \sum_{i=1}^n \ell(h,s_i)  \right|^2 \\
	& =  \left(\sum_{k=1}^K \mathbb{E}_{z\in \mathcal{Z}} \ell(h,z|z \in C_k) \mu(C_k) \right)^2 + \frac{1}{n^2} \mathbb{E}_\mathbf{s} \left(  \sum_{i=1}^n \ell(h,s_i) \right)^2  \\
	& \qquad - 2 \left(\sum_{k=1}^K \mathbb{E}_{z\in \mathcal{Z}} \ell(h,z|z \in C_k) \mu(C_k)  \right) \mathbb{E}_\mathbf{s} \left( \frac{1}{n} \sum_{i=1}^n \ell(h,s_i)  \right) \\
	& \leq  \left(\sum_{k=1}^K \mathbb{E}_{z\in \mathcal{Z}} \ell(h,z|z \in C_k) \mu(C_k) \right) \left| \sum_{k=1}^K \mathbb{E}_{z\in \mathcal{Z}} \ell(h,z|z \in C_k) \mu(C_k) -  \mathbb{E}_\mathbf{s} \frac{1}{n} \sum_{i=1}^{n}\ell(h,s_i) \right| \\
	& \qquad + \frac{1}{n^2} \mathbb{E}_\mathbf{s} \left(  \sum_{i=1}^n \ell(h,s_i) \right)^2 - \left(\sum_{k=1}^K \mathbb{E}_{z\in \mathcal{Z}} \ell(h,z|z \in C_k) \mu(C_k)  \right) \mathbb{E}_\mathbf{s} \left( \frac{1}{n} \sum_{i=1}^n \ell(h,s_i)  \right) \\
	& \leq M \underbrace{\left| \sum_{k=1}^K \mathbb{E}_{z\in \mathcal{Z}} \ell(h,z|z \in C_k) \mu(C_k) -  \mathbb{E}_\mathbf{s} \frac{1}{n} \sum_{i=1}^{n}\ell(h,s_i) \right|}_H + \frac{2M^2}{n}
	\end{align*}
	We then bound the term $H$ as follows.
	\begin{align*}
	H &= \left|  \sum_{k=1}^K \mathbb{E}_{z\in \mathcal{Z}} \ell(h,z|z \in C_k) \mathbb{E}_\mathbf{s} \frac{N_k}{n} -  \mathbb{E}_\mathbf{s} \frac{1}{n} \sum_{i=1}^{n}\ell(h,s_i)   \right| \\
	& \leq \frac{1}{n} \mathbb{E}_\mathbf{s}\left| \sum_{k=1}^K \mathbb{E}_{z\in\mathcal{Z}} \ell(h,z|z \in C_k) N_k - \sum_{j\in C_k} \ell(h,s_j)   \right| \\
	& \leq \frac{1}{n} \mathbb{E}_\mathbf{s} \sum_{k=1}^K N_k \max_{s_j \in C_k, z \in C_k  } |\ell(h,z) - \ell(h,s_j)| \\
	& = \frac{1}{n}  \sum_{k=1}^K N_k \mathbb{E}_\mathbf{s} \max_{s_j \in C_k, z \in C_k  } |\ell(h,z) - \ell(h,s_j)| \\
	& \leq \bar{\epsilon}(n).
	\end{align*}
	
	Then we have,
	\begin{equation*}
	\mathbb{E}_\mathbf{s}|\mathcal{L}(h) - \ell_\mathrm{emp}(h)|^2  \leq M \bar{\epsilon}(n) + \frac{2M^2}{n}.
	\end{equation*}
\end{proof}

	To analyze the generalization performance of deep learning with dropout, following lemma is central.
	\begin{lemma}[Bounded difference inequality  \citep{mcdiarmid1989method}]
		\label{lemma:difference}
		Let $ \mathbf{r}=(\mathbf{r}_1,\ldots,\mathbf{r}_L) \in \mathcal{R} $ be  $ L $ independent  random variables ($ \mathbf{r}_l $ can be vectors or scalars) with $ \mathbf{r}_l \in \{0,1\}^{m_l} $.  Assume that the function $ f: \mathcal{R}^L \rightarrow \mathbb{R} $ satisfies:
		\begin{equation*}
		\sup_{\mathbf{r}^{(l)},\widetilde{\mathbf{r}}^{(l)} } \left| f(\mathbf{r}^{(l)}) - f(\widetilde{\mathbf{r}}^{(l)}) \right| \leq c_l, \forall  l=1,\ldots, L,
		\end{equation*}
		whenever $ \mathbf{r}^{(l)} $ and $\widetilde{\mathbf{r}}^{(l)}$ differ only in the $ l $-th element. Here, $ c_l $ is a nonnegative function of $ l $. Then, for every $ \epsilon >0 $,
		\begin{align*}
		&\mathbb{P}_\mathbf{r} \left\{f(\mathbf{r}_1,\ldots,\mathbf{r}_L) - \mathbb{E}_\mathbf{r}f(\mathbf{r}_1,\ldots,\mathbf{r}_L)  \geq \epsilon  \right\} \\
		& \qquad \leq  \exp\left(-2\epsilon^2/\sum_{l=1}^L c_l^2\right) .
		\end{align*}
	\end{lemma}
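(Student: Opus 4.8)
The plan is to prove this by the standard martingale (Azuma--Hoeffding) argument, constructing the Doob martingale associated with $f$ and exploiting the bounded-difference hypothesis to control its increments. First I would filter on the coordinates: set $\mathcal{F}_l = \sigma(\mathbf{r}_1,\ldots,\mathbf{r}_l)$ and define the Doob martingale $V_l = \mathbb{E}_\mathbf{r}[f(\mathbf{r}_1,\ldots,\mathbf{r}_L) \mid \mathcal{F}_l]$ for $l = 0,\ldots,L$, so that $V_0 = \mathbb{E}_\mathbf{r} f$ and $V_L = f(\mathbf{r})$. Writing $D_l = V_l - V_{l-1}$ for the martingale differences, we get the telescoping identity $f(\mathbf{r}) - \mathbb{E}_\mathbf{r} f = \sum_{l=1}^L D_l$, and by the tower property $\mathbb{E}[D_l \mid \mathcal{F}_{l-1}] = 0$.

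The key step is to bound the conditional range of each increment using the hypothesis. Since $\mathbf{r}_l$ is independent of the other coordinates, integrating out $\mathbf{r}_{l+1},\ldots,\mathbf{r}_L$ expresses $V_l$ as a function of $\mathbf{r}_1,\ldots,\mathbf{r}_l$ only, while $V_{l-1}$ is the average of this function over the law of $\mathbf{r}_l$. The bounded-difference assumption then implies that, conditionally on $\mathcal{F}_{l-1}$, the random variable $D_l$ takes values in an interval whose length (the gap between the supremum and infimum over the possible values of $\mathbf{r}_l$) is at most $c_l$. With this conditional boundedness in hand I would invoke Hoeffding's lemma in its conditional form: for a mean-zero variable confined to an interval of width $c_l$, $\mathbb{E}[e^{\lambda D_l} \mid \mathcal{F}_{l-1}] \leq \exp(\lambda^2 c_l^2/8)$.

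From here the tail bound follows by a Chernoff argument combined with iterated conditioning. For $\lambda > 0$,
\[
\mathbb{E}_\mathbf{r}\, e^{\lambda(f - \mathbb{E}_\mathbf{r} f)} = \mathbb{E}_\mathbf{r}\, e^{\lambda \sum_{l=1}^L D_l},
\]
and peeling off the last factor by conditioning on $\mathcal{F}_{L-1}$, applying the Hoeffding bound, and repeating, yields $\mathbb{E}_\mathbf{r}\, e^{\lambda(f - \mathbb{E}_\mathbf{r} f)} \leq \exp(\lambda^2 \sum_{l=1}^L c_l^2 / 8)$. Markov's inequality then gives $\mathbb{P}_\mathbf{r}\{f - \mathbb{E}_\mathbf{r} f \geq \epsilon\} \leq \exp(-\lambda\epsilon + \lambda^2 \sum_l c_l^2/8)$, and optimizing over $\lambda$ (taking $\lambda = 4\epsilon/\sum_l c_l^2$) produces the claimed bound $\exp(-2\epsilon^2/\sum_{l=1}^L c_l^2)$.

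I expect the main obstacle to be the second step: rigorously justifying that the conditional oscillation of $D_l$ is controlled by $c_l$. This requires carefully using the independence of the coordinates to rewrite $V_l - V_{l-1}$ as an average, over the law of $\mathbf{r}_l$, of a quantity whose dependence on $\mathbf{r}_l$ is governed by the single-coordinate difference appearing in the hypothesis; one must be attentive that the constant $c_l$ is a bound uniform over all configurations of the remaining coordinates, which is precisely what the supremum in the assumption supplies, and that averaging over later coordinates cannot increase this oscillation. Everything else \textemdash{} the telescoping, Hoeffding's lemma, and the Chernoff optimization \textemdash{} is routine.
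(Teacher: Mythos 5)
Your proof is correct: it is the standard Doob-martingale argument (telescoping $f-\mathbb{E}f$ into conditional-expectation increments, bounding each increment's conditional oscillation by $c_l$ via the single-coordinate hypothesis, applying Hoeffding's lemma and a Chernoff bound, and optimizing $\lambda = 4\epsilon/\sum_l c_l^2$), and all the constants work out to give $\exp(-2\epsilon^2/\sum_{l=1}^L c_l^2)$. Note, however, that the paper does not prove this lemma at all \textendash{} it is imported verbatim from \citet{mcdiarmid1989method} as a known tool for the dropout analysis \textendash{} so there is nothing in the paper to compare against; your argument simply reconstructs the classical proof of McDiarmid's inequality, and does so soundly.
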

	
\section{Proof of Theorem \ref{theo:high_prob_bd}}
%The proof of Theorem \ref{theo:high_prob_bd} builds on the following lemmas.

\begin{proof}[Proof of Theorem \ref{theo:high_prob_bd}]
Now we proceed to prove Theorem \ref{theo:high_prob_bd}. Using Chebyshev's inequality, Lemma \ref{lemma:var_bound} leads to the following inequality:
\begin{equation*}
\mathrm{Pr}_\mathbf{s} \left\{   |\mathcal{L}(h) - \ell_\mathrm{emp}(h) | \geq \epsilon |h \right\} \leq \frac{n M \mathbb{E}_\mathbf{s} \max_{s\in \mathbf{s}, z\sim s} |\ell(h,s) - \ell(h,z)| + 2M^2}{n\epsilon^2}.
\end{equation*}

By integrating with respect to $ h $, we can derive the following bound on the generalization error:
\begin{equation*}
\mathrm{Pr}_{\mathbf{s},\mathcal{A}} \left\{  |\mathcal{L}(h) - \ell_\mathrm{emp}(h) | \geq \epsilon \right \} \leq \frac{n M \mathbb{E}_{\mathcal{A},s} \max_{s\in \mathbf{s}, z\sim s} |\ell(h,s) - \ell(h,z)| + 2M^2 }{n\epsilon^2}.
\end{equation*}
This is equivalent to:
\begin{equation*}
  |\mathcal{L}(h) - \ell_\mathrm{emp}(h) | \leq \sqrt{ \frac{nM\bar{\epsilon}(n) + 2M^2}{\delta n} }
\end{equation*}
holds with a probability greater than $ 1-\delta $.
\end{proof}

\section{Proof of Theorem \ref{theo:variance_bound}}
\begin{proof}
	To simplify the notations, we use $ X(h) $ to denote the random variable $ \max_{z\sim s} |\ell(h, s) - \ell(h,z)| $. According to the definition of ensemble robustness, we have $ \mathbb{E}_{\mathcal{A}} X(h) \leq \epsilon(n) $. Also, the assumption gives $\mathrm{var}[X(h)] \leq \alpha$. According to Chebyshev's inequality, we have,
	\begin{equation*}
	\mathbb{P}\left\{X(h) \leq \epsilon(n) + \frac{\alpha}{\sqrt{\delta}}\right\} \geq 1- \delta.
	\end{equation*}
	Now, we proceed to bound $ |\mathcal{L}(h) - \ell_{\mathrm{emp}}(h)| $ for any $ h \sim \Delta(\mathcal{H}) $ output by $ \mathcal{A}_\mathbf{s} $.
	
	Following the proof of Lemma \ref{lemma:var_bound}, we also divide the set $ \mathcal{Z} $ into $ K $ disjoint set $ C_1,\ldots, C_K $ and let $ N_i $ be the set of index of points in $ \mathcal{s} $ that fall into $ C_i $. Then we have,
	\begin{align*}
	& |\mathcal{L}(h) - \ell_{\mathrm{emp}}(h)| \\
	& \leq \frac{1}{n}\sum_{i=1}^K\sum_{j\in N_i} \max_{z\in C_i}|\ell(h,s_j) - \ell(h,z)| + \sqrt{\frac{2K\ln 2 + 2\ln(1/\delta)}{n}} \\
	& \leq \epsilon(n) + \frac{\alpha}{\sqrt{\delta}} + \sqrt{\frac{2K\ln 2 + 2\ln(1/\delta)}{n}}
	\end{align*}
	holds with  probability at least $ 1-2\delta $. Let $ \delta$ be $2\delta $, we have,
	\begin{equation*}
	|\mathcal{L}(h) - \ell_{\mathrm{emp}}(h)| \leq \epsilon(n) + \frac{\alpha}{\sqrt{2\delta}} + \sqrt{\frac{2K\ln 2 + 2\ln(1/2\delta)}{n}}
	\end{equation*}
	holds with  probability at least $ 1-\delta $.
	This gives the first inequality in the theorem. The second inequality can be straightforwardly derived from the fact that $ \mathrm{var}(X) = \mathbb{E}[X^2] - (\mathbb{E}[X])^2 \leq M \mathbb{E}[X]-(\mathbb{E}[X])^2  $.
\end{proof}

\section{Proof of Theorem \ref{theo:dropout}}
\begin{proof}
	Let $ R(\mathbf{s},\mathbf{r}) = \mathcal{L}(\mathcal{A}_{\mathbf{s},\mathbf{r}}) - \ell_\mathrm{emp} (\mathcal{A}_{\mathbf{s},\mathbf{r}})$ denote the random variable that we are going to bound. For every $ \mathbf{r},\mathbf{t} \in \mathcal{R}^L $, and $ L\in \mathbb{N} $, we have
	\begin{align*}
	& |R(\mathbf{s},\mathbf{r}) - R(\mathbf{s},\mathbf{t})| \\
	&= \left|\mathbb{E}_{z\in\mathcal{Z}} \left[\ell(\mathcal{A}_{\mathbf{s},\mathbf{r}},z) - \ell(\mathcal{A}_{\mathbf{s},\mathbf{t}},z)\right] - \frac{1}{n}\sum_{i=1}^n \left(\ell(\mathcal{A}_{\mathbf{s},\mathbf{r}},z_i) - \ell(\mathcal{A}_{\mathbf{s},\mathbf{t}},z_i)\right)\right| \\
	&\leq \mathbb{E}_{z\in\mathcal{Z}}  \left| \ell(\mathcal{A}_{\mathbf{s},\mathbf{r}},z) - \ell(\mathcal{A}_{\mathbf{s},\mathbf{t}},z) \right| + \frac{1}{n}\sum_{i=1}^n \left|\ell(\mathcal{A}_{\mathbf{s},\mathbf{r}},z_i) - \ell(\mathcal{A}_{\mathbf{s},\mathbf{t}},z_i)\right|.
	\end{align*}
	According to the definition of $ \beta $:
	\begin{equation*}
	\sup_{\mathbf{r},\mathbf{t} }  |R(\mathbf{s},\mathbf{r})  -  R(\mathbf{s},\mathbf{t}) | \leq 2\beta,
	\end{equation*}
	and applying Lemma \ref{lemma:difference} we obtain (note that $ \mathbf{s} $ is independent of $ \mathbf{r} $)
	\begin{equation*}
	\mathbb{P}_{\mathbf{r}} \left\{R(\mathbf{s},\mathbf{r}) - \mathbb{E}_\mathbf{r} R(\mathbf{s},\mathbf{r}) \geq \epsilon |\mathbf{s} \right\} \leq \exp\left(\frac{-\epsilon^2}{2L\beta^2  } \right).
	\end{equation*}
	We also have
	\begin{equation*}
	\mathbb{E}_\mathbf{s} \mathbb{P}_{\mathbf{r}} \left\{R(\mathbf{s},\mathbf{r}) - \mathbb{E}_\mathbf{r} R(\mathbf{s},\mathbf{r}) \geq \epsilon  \right\} 
	= \mathbb{E}_\mathbf{s} \mathbb{P}_{\mathbf{r}} \left\{R(\mathbf{s},\mathbf{r}) - \mathbb{E}_\mathbf{r} R(\mathbf{s},\mathbf{r}) \geq \epsilon |\mathbf{s}  \right\} 
	\leq \exp\left(\frac{-\epsilon^2}{2L\beta^2  } \right).
	\end{equation*}
	Setting the r.h.s. equal to $ \delta $ and writing $ \epsilon $ as a function of $ \delta $, we have that with probability at least $ 1-\delta $ w.r.t.\ the random sampling of $ \mathbf{s} $ and $ \mathbf{r} $:
	\begin{equation*}
	R(\mathbf{s},\mathbf{r}) - \mathbb{E}_\mathbf{r} R(\mathbf{s},\mathbf{r}) \leq \beta \sqrt{2L\log(1/\delta)}.
	\end{equation*}
	Then according to Lemma \ref{lemma:robustness}:
	\begin{equation*}
	\mathbb{E}_\mathbf{r} R(\mathbf{s},\mathbf{r}) \leq \bar{\epsilon}(n) + \sqrt{\frac{2K \ln 2 + 2 \ln(1/\delta)}{n}}
	\end{equation*}
	holds with probability greater than $ 1-\delta $. Observe that the above two inequalities hold simultaneously with probability at least $ 1-2\delta $. Combining those inequalities and setting $ \delta = \delta/2 $ gives
	\begin{equation*}
	R(\mathbf{s},\mathbf{r}) \leq  \beta \sqrt{2L\log(1/\delta)} + \bar{\epsilon}(n) + \sqrt{\frac{2K \ln 2 + 2 \ln(2/\delta)}{n}}.
	\end{equation*}
	
%	We first bound the expectation of $ R(\mathbf{s},\mathbf{r}) $. Define $ G(\mathbf{s},z):=\mathbb{E}_\mathbf{r} [\ell(\mathcal{A}_{\mathbf{s},\mathbf{r}},z)] $. We have
%	\begin{align*}
%	&\mathbb{E}_{\mathbf{s},\mathbf{r} } R(\mathbf{s},\mathbf{r}) \\
%	&= \mathbb{E}_\mathbf{s} \left[\mathbf{E}_{z\in\mathcal{Z}} \left[ G(\mathbf{s},z) - \frac{1}{n}\sum_{i=1}^{n}G(\mathbf{s},z_i) \right]\right] \\
%	&= \mathbb{E}_{\mathbf{s},z\in\mathcal{Z}} [G(\mathbf{s},z)] - \frac{1}{n}\sum_{i=1}^{n} \mathbb{E}_{\mathbf{s}} G(\mathbf{s},z_i)\\
%	&\leq 2 \epsilon(n).
%	\end{align*}
\end{proof}

\end{document}

% --- supplement: ensemble_supp.tex ---

\maketitle

	\section{Additional Simulations}
	In this section, we provide additional simulations to empirically validate our theories about ensemble robustness. First, we conduct simulations to investigate the relation between performance variation and generalization as stated in Theorem \ref{theo:variance_bound}.
%	This section is devoted to simulations for quantitatively and qualitatively demonstrating how the ensemble robustness of a deep learning method explains its performance. We first introduce  implementation details and our experiment settings.
	%After introducing  implementation details, we present simulation results for the comparison of model robustness and ensemble robustness. Then, we investigate the empirical ensemble robustness and generalization performance for different deep learning methods, as well as how we can predict the generalization of a specific deep learning method through the variance of its empirical performance. We visualize the learned representations to offer some intuition on the effect of ensemble robustness. Finally, we propose a new semi-supervised deep learning method inspired by the theory developed in this work and evaluate its performance.
	
%	\subsection{Experiment Settings}
%	\label{subsec:details}
%	\paragraph{Dataset} We conduct simulations on the popular MNIST handwritten digit dataset which contains $ 60{,}000 $ training samples and $ 10{,}000 $ test samples \citep{lecun1998mnist}. The digit images have been size-normalized and centered  in pre-processing. 
%	
%	\paragraph{Network architecture} Without explicit explanation, we use  a convolutional network with  the following architecture throughout the simulations. The network contains two convolutional layers (with $32$ and $64$ $5\times 5$ filters respectively), each of which is followed by a max-pooling layer ($ 3 \times 3 $ and $2 \times 2$ respectively), and two fully connected layers (with $200$ and $10$ units) on top.  The output of the last fully-connected layer is fed to a $10$-way softmax and cross-entropy loss function. The magnitude of the loss  is upper bounded by $1$. We call this network as \emph{Network-I}. In order to avoid the bias brought by specific network architecture on our observations, we also consider another two alternative architectures, called \emph{Network-II} and \emph{Network-III} respectively.  Compared with Network-I, Network-II only has one convolutional layer (with $64$ filters) and Network-III has an additional fully connected layer (with $100$ units) on top of the $200$-unit layer. In short, the three networks are ranked as follows according to their complexity: Network-III $>$ Network-I $>$ Network-II. 
%	
%	\paragraph{Parameter setting} We trained this network using minibatch of $100$ training examples at a time, a learning rate of $0.01$, a learning rate decay of $1\times 10^{-6}$ and a momentum of $0.9$. For the Dropout algorithm we add a dropout layer with a dropout rate of $0.5$ after the first fully connected layer. 
%	%For linearization methods, we set the uncertainty ball radius to be 2. Finally, we used a l_inf model to generate a adversarial test set, and tested all above five models on this set and on the original test set.
%	
%	\paragraph{Compared algorithms} We evaluate and compare the ensemble robustness as well as the generalization performance for following $7$ deep learning algorithms: stochastic gradient descent (SGD), SGD plus dropout, Prioritized Sweeping (PS) applied for supervised learning \cite{schaul2015prioritized}, PS plus dropout, and three versions of adversarial training algorithms proposed by \citep{shaham2015understanding}, where the magnitude of perturbation is measured by its $\ell_1$, $\ell_2$ and $\ell_\infty$ norm respectively.
%	
%	\subsection{Robustness vs.\ Ensemble Robustness}
%	We first present  simulations that motivate the concept of ensemble robustness: why a deep neural network that is not robust to adversarial sample perturbation  can perform well in practice? In the simulations, we examine four network models with only fully connected layers: two of the examined networks have two fully connected layers and the other two have three fully connected layers. The architecture configuration is inspired by \citep{szegedy2013intriguing} and the layer sizes for the used networks are provided in Table \ref{tab:robustness}. We generate a set of adversarial instances for a given network, using the approach proposed by \citep{szegedy2013intriguing}, and feed these examples for the corresponding  network to evaluate its classification error. We in particular compare the performance between a \emph{deterministic} hypothesis and a \emph{randomly} sampled hypothesis. The randomized hypothesis is produced by following the definition of ensemble robustness: we run the network training for $10$ times with different initialization, and randomly sample one hypothesis from the pool of $10$ hypothesis for testing. Note that we do not perform any multiple model combination here. The results are given in Table \ref{tab:robustness}. From the table, one can make the following observation. The deterministic  hypothesis performs as  well as its randomized counterpart on training  and normal testing examples (without adversarial perturbation). However, on perturbed samples, the deterministic one performs much worse  than the randomized one. This suggests that deterministic robustness argument (as the one proposed by \citep{xu2012robustness}) is not applicable for deep learning algorithms: the algorithm is not robust to adversarial perturbation but indeed performs well in practice. Instead, we should consider the ensemble robustness w.r.t.\ a population of hypothesis, as the randomized hypothesis performs well on specifically designed adversarial samples.
%	
%	\begin{table*}
%		\caption{Comparison between deterministic  and randomized deep learning methods on their training error, test error and misclassification rate on  testing examples with perturbation.}
%		\small
%		\centering
%		\begin{tabular}{l|l|l|l|l|l|l}
%			&\multicolumn{3}{c|}{deterministic } &\multicolumn{3}{c}{randomized }\\
%			Network & training &test & perturbed & training &test &perturbed \\
%			\hline
%			N100-100-10 &$0.57\%$ &$0.84\%$ &$2.1\%$ &$0.56\%$ &$0.85\%$ &$0.86\%$\\
%			N200-200-10 &$0.20\%$ &$0.71\%$ &$1.94\%$ &$0.20\%$ &$0.70\%$  &$0.87\%$ \\
%			N100-100-100-10  &$0\%$ &$0.51\%$ &$1.68\%$ &$0\%$ &$0.49\%$ &$0.45\%$ \\
%			N200-200-200-10  &$0\%$ &$0.38\%$ &$1.54\%$ &$0\%$ &$0.38\%$ &$0.35\%$ 			
%		\end{tabular}
%		\label{tab:robustness}
%		\vspace{-6mm}
%	\end{table*}

%	\begin{figure*}[h]
%		\subfigure[raw data]{
%			\includegraphics[width=0.3\linewidth]{./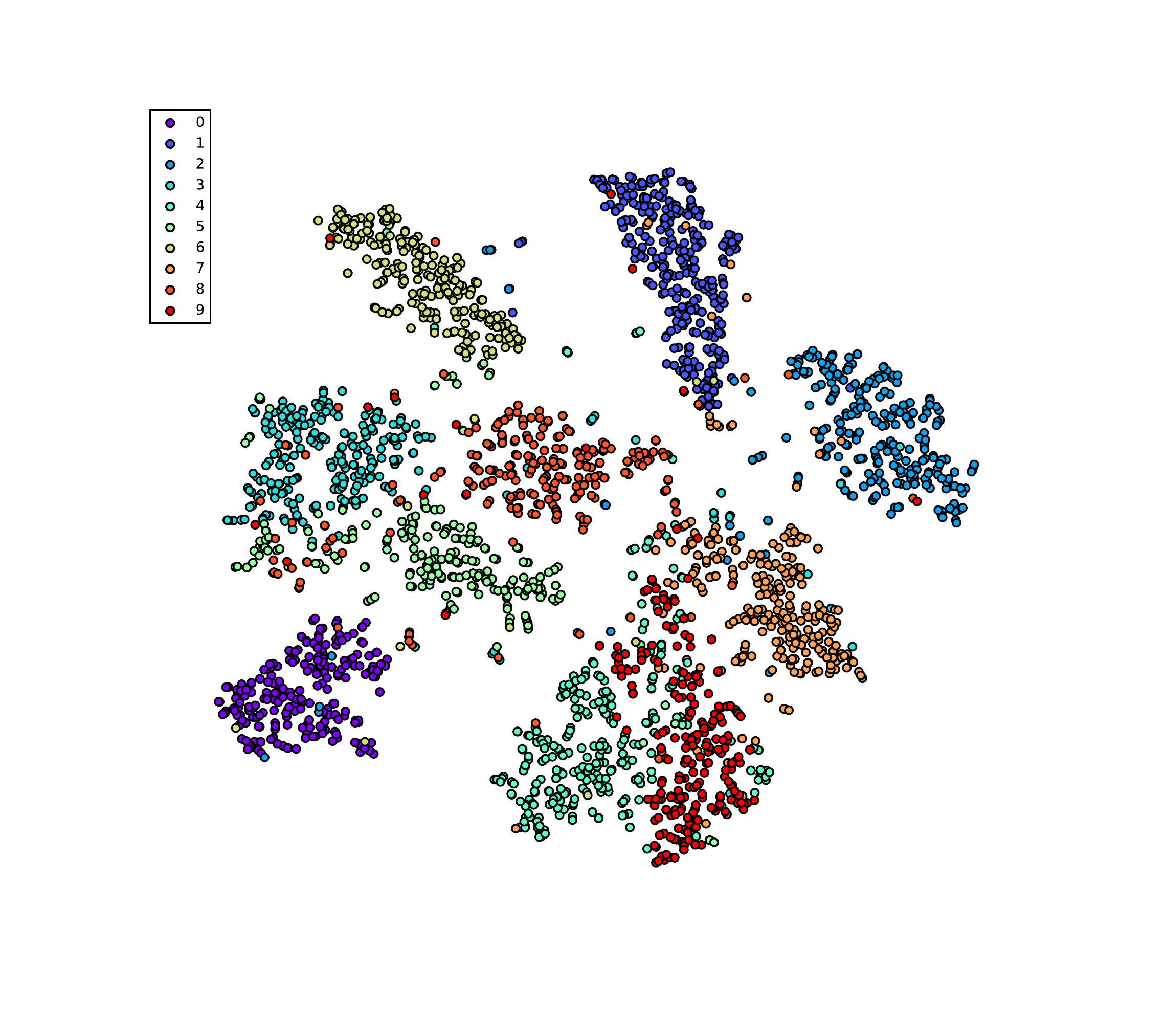}
%		}
%		\subfigure[SGD]{
%			\includegraphics[width=0.3\linewidth]{./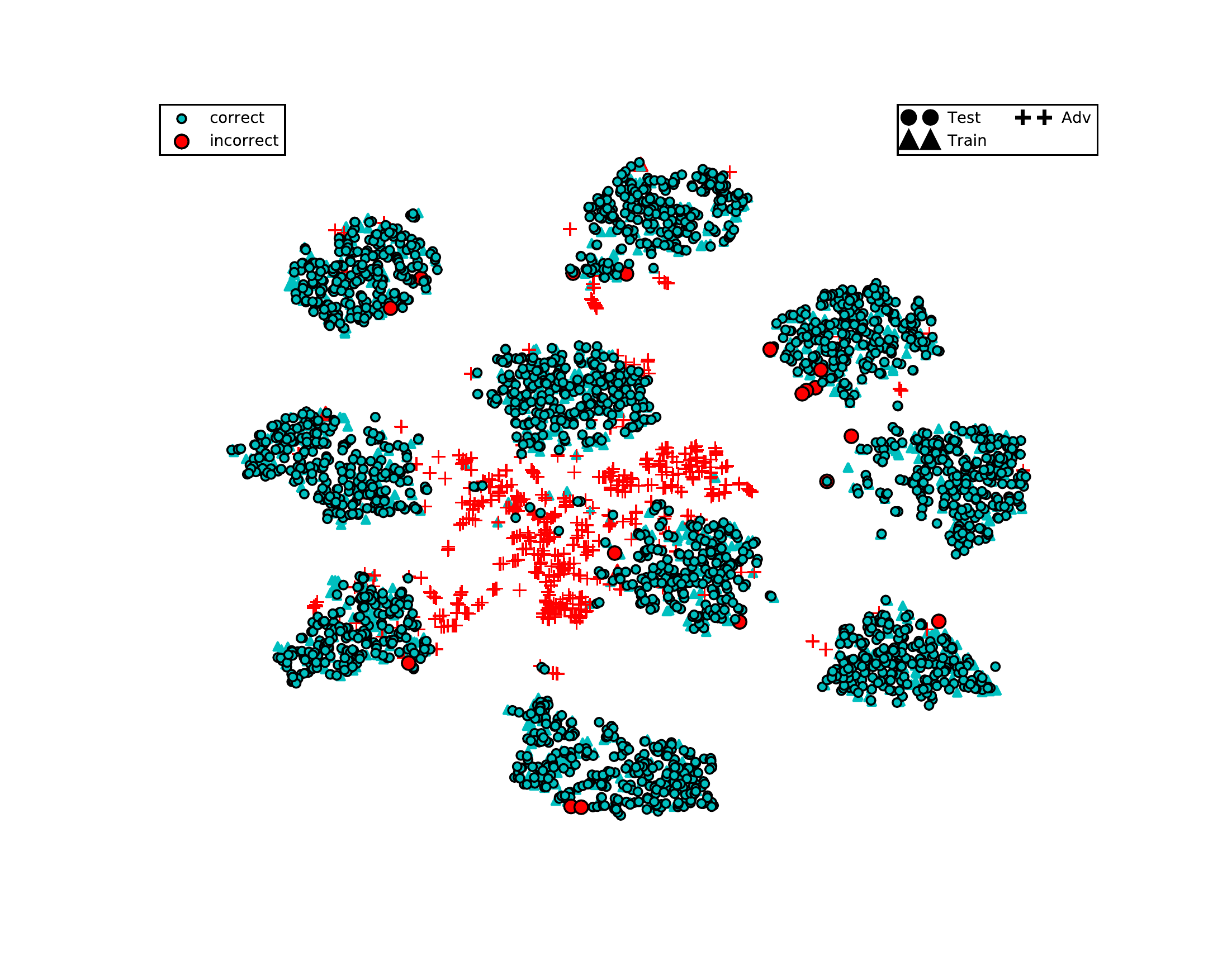}
%		}
%		\subfigure[Dropout]{
%			\includegraphics[width=0.3\linewidth]{./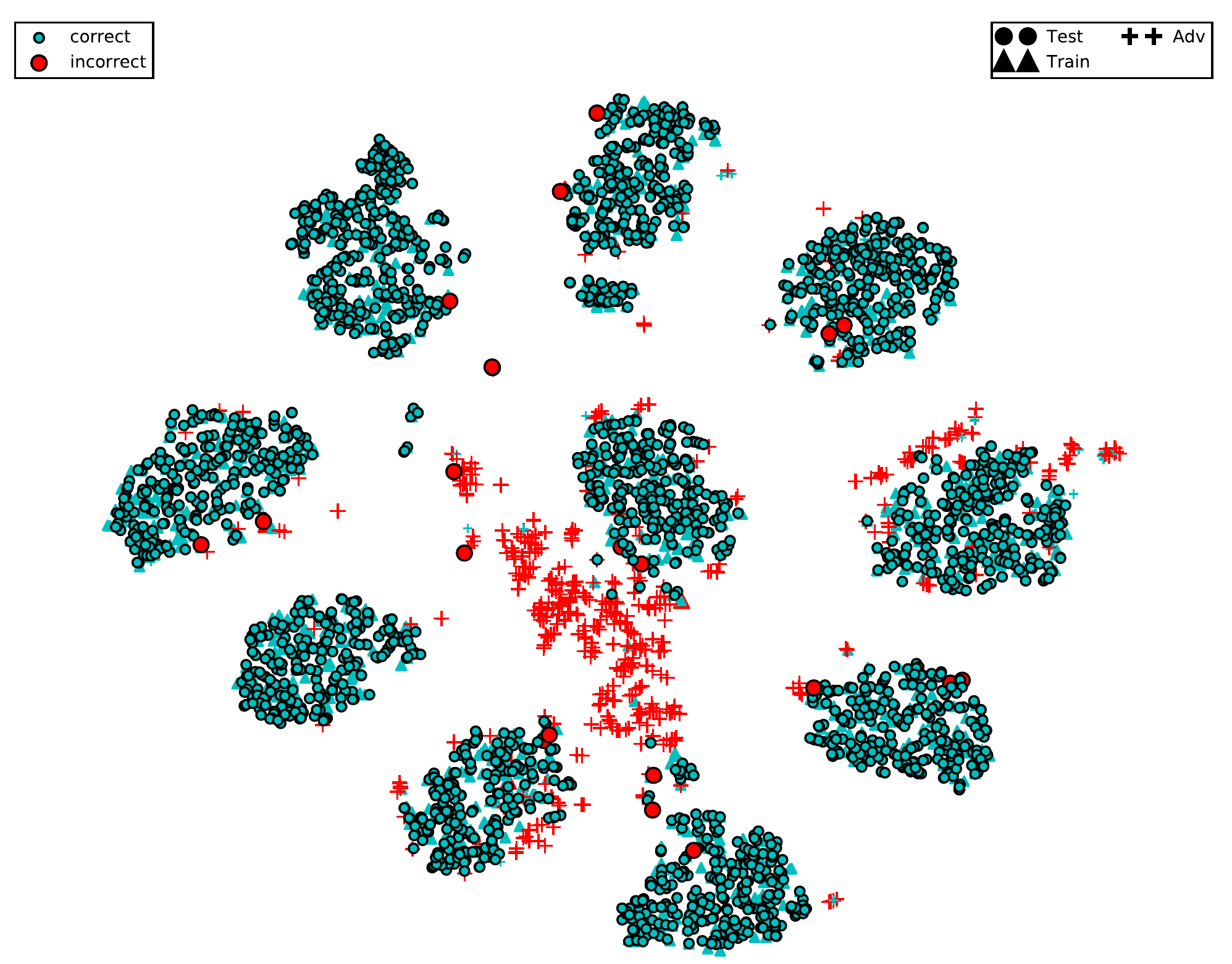}
%		}
%		\subfigure[$\ell_1$ adversarial training]{
%			\includegraphics[width=0.3\linewidth]{./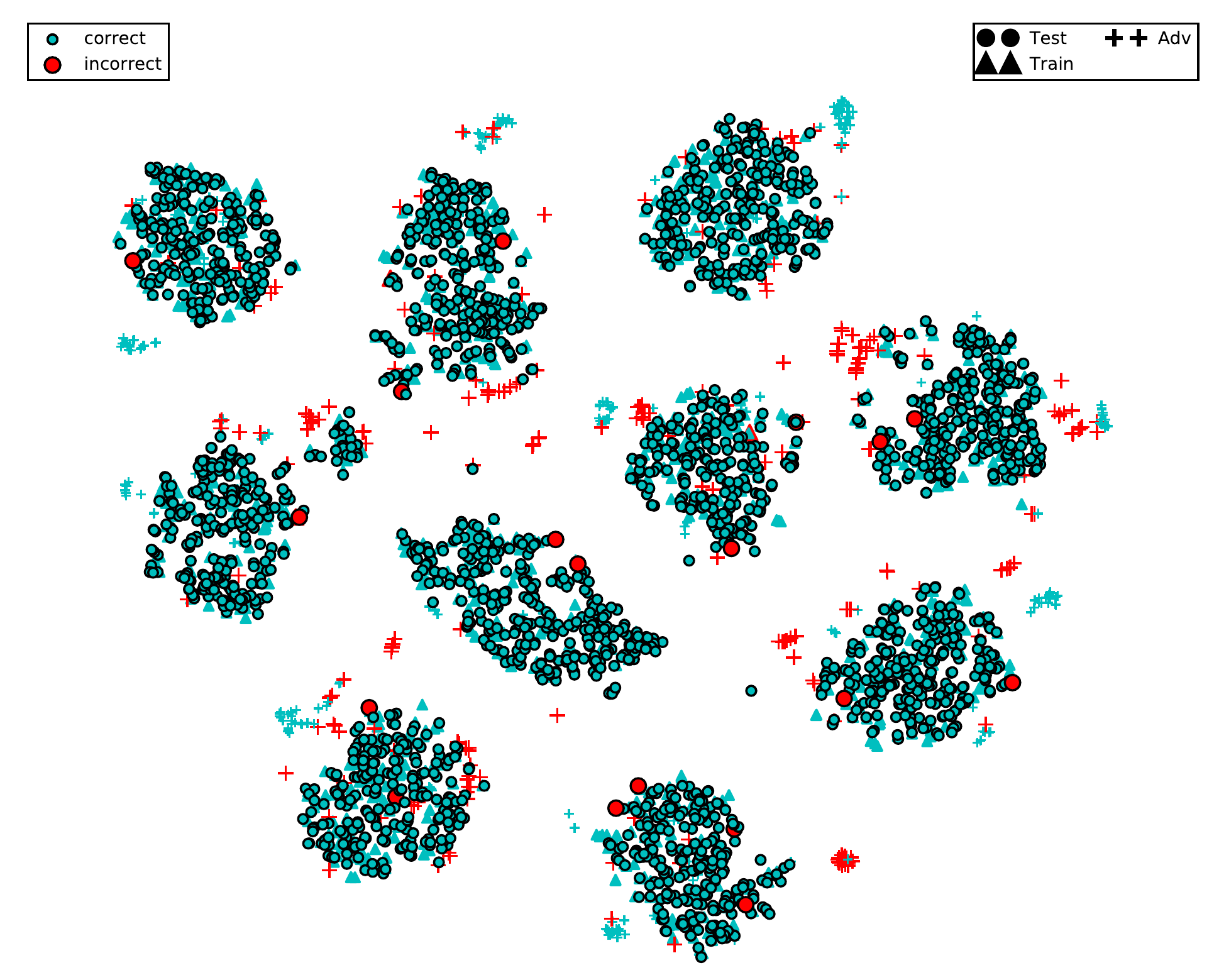}
%		}
%		\subfigure[$\ell_2$ adversarial training]{
%			\includegraphics[width=0.3\linewidth]{./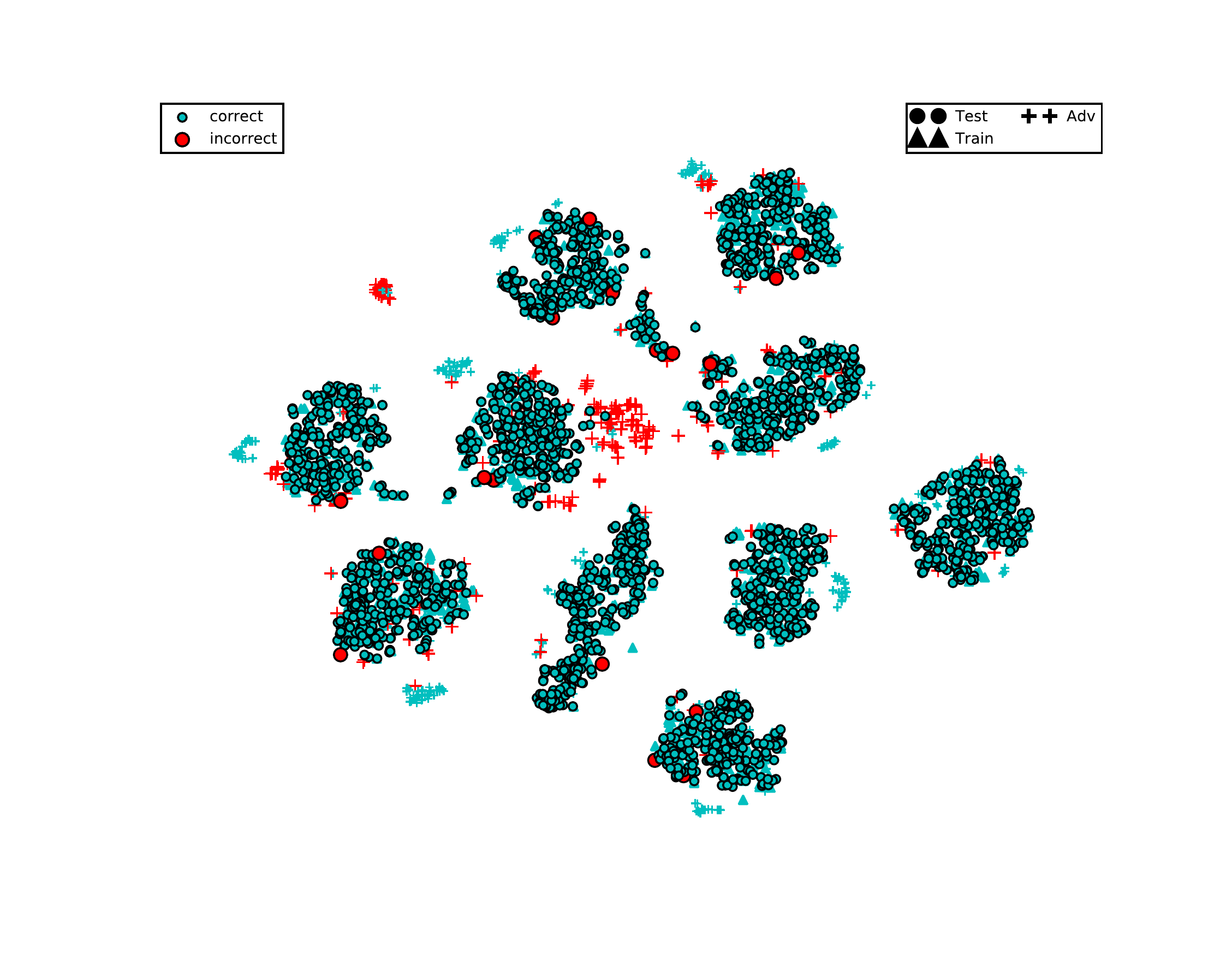}
%		}
%		\subfigure[$\ell_\infty$ adversarial training]{
%			\includegraphics[width=0.3\linewidth]{./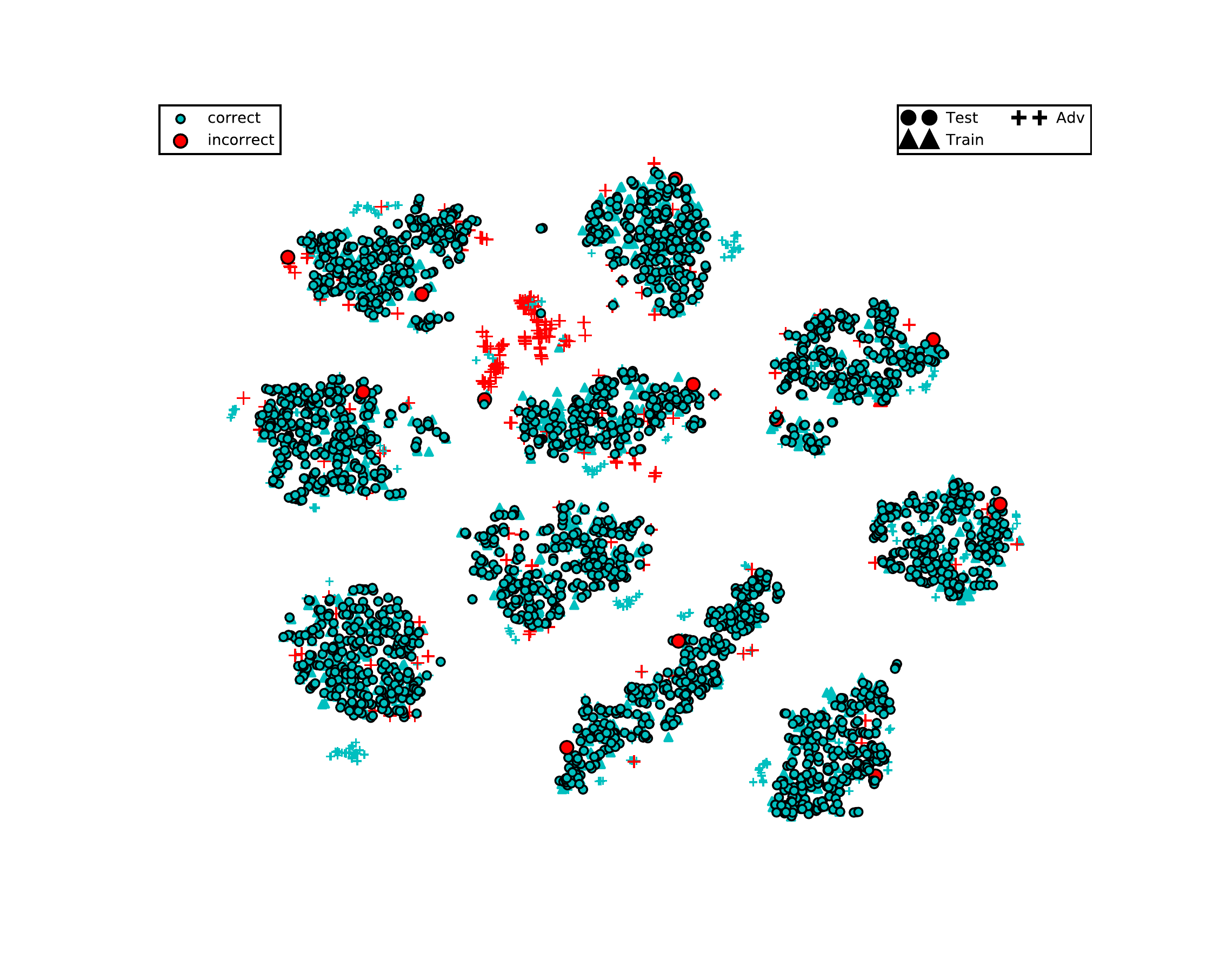}
%		}
%		\caption{tSNE visualization on the learned representation from a deep convolutional neural network trained  with different algorithms (on the same training dataset). (a) raw data; (b) vanilla SGD; (c) SGD plus dropout; (d)-(f) adversarial training with different measures on the magnitude of  perturbation (ref. \eqref{eqn:syn_adversarial}). Different types of the sample are marked by different shapes:  $\triangle$  for training, $\circ$ for test and $+$ for perturbed examples. In (a) samples are colored by their groundtruth categories. In (b)-(f),  samples  classified correctly are marked by green and those not correctly classified are marked by red color. Similar to (a), each cluster fo samples corresponds to one category.  Compared with SGD and dropout, the adversarial training algorithms in (d)-(f) provides more compact clusters with larger margin among different clusters. In addition, their classification accuracy on perturbed samples is much higher than SGD and dropout.  }
%		\label{fig:visualize}
%		%	\vspace{-6mm}
%	\end{figure*}

%	\subsection{Empirical Ensemble Robustness vs.\ Generalization}
%	We then proceed to investigate the relation between ensemble robustness and generalization performance  of a deep learning algorithm. As ensemble robustness involves taking expectation over all the possible output hypothesis, it is computationally difficult  to exactly measure the ensemble robustness for different deep learning algorithms. In this simulation,  we take the empirical average of robustness to adversarial perturbation from $30$ different hypothesis of the same learning algorithm as its ensemble robustness. 
%	
%	In particular,  we aim to empirically demonstrate that a deep learning algorithm with stronger  ensemble robustness presents better generalization performance. Recall the definition of ensemble robustness in Definition \ref{def:uniform}, another obstacle in calculating ensemble robustness is to  find the most adversarial perturbation $ \Delta s $ (or equivalently the most adversarial example $z=s+\Delta s$) for a specific  training sample $ s \in \mathbf{s} $  within a partition set $C_i$. We therefore employ an approximate search strategy for finding the adversarial examples. More concretely, we optimize the following first-order Taylor expansion of the loss function as a surrogate for finding the adversarial example :
%	\begin{equation}
%	\label{eqn:syn_adversarial}
%	\Delta s_i \in \underset{ \|\Delta s_i \| \leq r }{\arg\max}  \quad \ell(s_i) + \langle \nabla \ell_{s_i}(s), \Delta s_i \rangle,
%	\end{equation}
%	with a pre-defined magnitude constraint $r$ on the  perturbation $\Delta s_i$. In the simulations, we vary the magnitude $r$ in order to calculate the empirical ensemble robustness at different perturbation levels. More details about this approximation technique can be found in \citep{shaham2015understanding}.
%	
%	\begin{figure}[h]
%		\includegraphics[width=1\linewidth]{./}
%		\caption{Empirical ensemble robustness $ \bar{\epsilon}_\mathrm{emp} $ for different deep learning algorithms with Network I. Here the adversarial perturbation is given with different magnitudes (x-axis).}
%		\label{fig:local_generalization}
%		\vspace{-2mm}
%	\end{figure}
%	
%	We optimize the networks with different architectures depicted in Section \ref{subsec:details} using seven algorithms: vanilla SGD, vanilla SGD plus dropout, prioritized sweeping (PS), PS plus dropout,  and three versions of the adversarial training methods proposed in \citep{shaham2015understanding} with different metric on the magnitude of $r$. Each deep learning algorithm is run for $10$ times with different random seeds (including the ones for initialization, training example shuffle and dropout). 
%	
%	We then calculate the \emph{empirical} ensemble robustness by averaging the difference between the  loss of the algorithm  on the training samples and the  adversarial samples output by the method in \eqref{eqn:syn_adversarial}:
%	\begin{equation*}
%	\bar{\epsilon}_\mathrm{emp} =  \frac{1}{T} \sum_{t=1}^T \max_{i\in\{1,\ldots,n\}}  |\ell(\mathcal{A}^{(t)}_\mathbf{s},s_i) - \ell(\mathcal{A}^{(t)}_\mathbf{s},s_i+\Delta s_i)|,
%	\end{equation*}
%	with $T=10$ denoting the number of different runs of the algorithm in the simulations. We monotonically increase the magnitude $r$ of the perturbation $ \Delta s_i $ in \eqref{eqn:syn_adversarial} to generate  adversarial examples at different noise levels. 
%	
%	The generalization performance of different algorithms and different networks on  test set  are given in Table \ref{tab:comparision_baseline}.  We plot the empirical ensemble robustness of different deep learning algorithms $\bar{\epsilon}$ against   magnitude of the sample perturbation in Figure \ref{fig:local_generalization}. We do not plot the curves for PS and PS+dropout algorithms as they overlap with the ones of SGD and SGD+dropout heavily, respectively.
%	
%	One can observe that our simulations have yielded positive evidence on MNIST to support our claim on the relation between ensemble robustness and algorithm generalization performance. 	 Compare the results given in Table \ref{tab:comparision_baseline} and Figure \ref{fig:local_generalization}, one can observe that adversarial training based methods consistently present stronger ensemble robustness (smaller $\bar{\epsilon}$) than pure SGD and dropout, across different magnitude  of the perturbation. Also, they produce lower testing error than SGD and dropout. This verifies our proposed ensemble robustness indeed indicates the performance of a deep learning method accurately. We also provide ensemble robustness for different networks (\ie, Network I, II and III) in the supplementary material due to space limitation.
%	
%	\begin{table}[h]
%		\caption{Generalization performance (test error) of different learning algorithms with different networks on MNIST.}
%		\centering
%		\small
%		\begin{tabular}{l|l|l|l}
%			Alg. & Network-I & Network-II & Network-III \\
%			\hline
%			Vanilla SGD & $ 99.11\% $ & $ 96.23\% $ & $ 97.81\% $ \\
%			Prioritized Sweeping &$99.18\%$ & $ 96.48\% $ & $ 97.07\% $ \\
%			SGD + Dropout &$99.22\%$ & $ 97.61\% $ & $ 98.92\% $\\
%			PS + Dropout &$99.26\%$ & $ 97.94\% $ & $ 98.11\% $ \\
%			$ \ell_1 $ adversarial  &$99.33\%$ & $ 98.91\% $ & $ 99.30\% $\\
%			$ \ell_2 $ adversarial  &$99.34\%$ & $ 98.04\% $ & $ 99.15\% $\\
%			$ \ell_\infty $ adversarial  &$99.35\%$ & $ 98.12\% $ & $ 99.30\% $ \\
%		\end{tabular}
%		\label{tab:comparision_baseline}
%		\vspace{-2mm}
%	\end{table}
%	
%	
%	

	\subsection{ Performance Variance vs.\ Generalization}
	We now investigate whether  the performance (or more concretely, the variance of empirical ensemble robustness) variance from multiple runs of the deep learning algorithms also affects  the generalization performance, as stated in Theorem \ref{theo:variance_bound}. To empirically estimate the variance of $ \max_{z\sim s}|\ell(\mathcal{A}_\mathbf{s},s) - \ell(\mathcal{A}_\mathbf{s},z)| $, we run all the deep learning algorithms  for $ 30 $ times with Network II. The simulations on Network I and III also present similar trend. Here the adversarial perturbed examples $ z =s+\Delta s$ are also provided by the strategy in \eqref{eqn:syn_adversarial}. The variance statistics as well as the final performance of different algorithms on MNIST dataset are presented in Figure~\ref{fig:variance}. One can observe from the table that controlling the performance variance is important for deep learning algorithms. Also, this gives a computationally feasible way to estimate the generalization performance of deep learning methods.
	\begin{figure}[h]
		\centering
		\includegraphics[width=1.0\linewidth]{./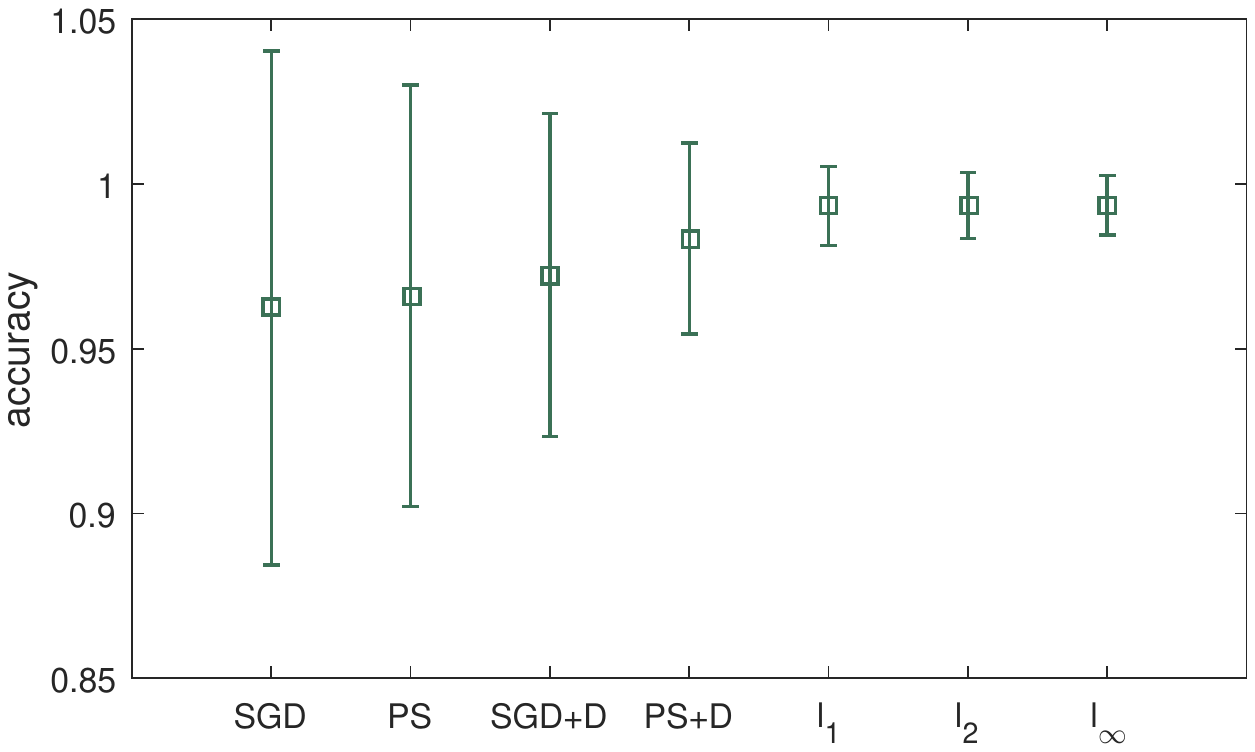}
		\caption{Variance of hypothesis robustness and final performance of different deep learning algorithms with Network II. $ \square $ indicates the classification accuracy and the error bar describes the variance. It can be observed that algorithm with smaller robustness variance also gives more accurate classification results. }
		\label{fig:variance}
		%	\vspace{-6mm}
	\end{figure}
	%	\begin{table}
	%		\caption{Variance of $ \max_{z\sim s}|\ell(\mathcal{A}_\mathbf{s},s) - \ell(\mathcal{A}_\mathbf{s},z)| $ and final performance of deep learning algorithms. The variance is estimated from $ 30 $ runs of different algorithms. \td{we are experimenting with different \# of layers to illustrate the results hold for network with different architectures. } }
	%		\small
	%		\centering
	%		\begin{tabular}{l|l|l}
	%			Alg. & acc & var \\
	%			\hline
	%			Vanilla SGD & $ 96.23\% $ &$0.78$ \\
	%			Prioritized Sweeping &$96.61\%$ & $0.64$ \\
	%			Vanilla SGD + Dropout &$97.24\%$ &$0.49$\\
	%			Prioritized Sweeping + Dropout &$98.34\%$ &$0.29$ \\
	%			$ \ell_1 $ adversarial training &$99.33\%$ &$0.12$\\
	%			$ \ell_2 $ adversarial training &$99.34\%$ &$0.10$\\
	%			$ \ell_\infty $ adversarial training &$99.35\%$ &$0.09$			
	%		\end{tabular}
	%		\label{tab:variance}
	%	\end{table}
	
	\begin{table*}[h]
		\centering
		\small
		\caption{A collection of previously reported MNIST test errors (in $\%$) followed by the results with our proposed semi-supervised learning method. Standard deviation in parentheses.  }
		\begin{tabular}{l |l |l |l}
			\# of used labels & $100$ & $1000$ & All \\
			\hline
			Embedding \cite{weston2012deep} &$16.86$ &$5.73$ &$1.50$ \\
			Transductive SVM \cite{weston2012deep} &$16.81$ &$5.38$ &$1.40$ \\
			MTC \cite{rifai2011manifold} &$12.03$ &$3.64$ &$0.81$ \\
			Pseudo-label \cite{lee2013pseudo} &$10.49$ &$3.46$ &  \\
			AtlasRBF \cite{pitelis2014semi} &$8.10(\pm 0.95)$ &$3.68(\pm 0.12)$ &$1.31$\\
			DGN \cite{kingma2014semi} &$3.33(\pm 0.14)$ &$2.40(\pm 0.02)$ &$0.96$\\
			DBM, Dropout \cite{srivastava2014dropout} & & &$0.79$\\
			Adversarial \cite{goodfellow2014explaining} & & &$0.78$\\
			Virtual Adversarial \cite{miyato2015distributional} &$2.12$ &$1.32$ &$0.64(\pm 0.03)$\\
			Ladder Network \cite{rasmus2015semi} &$1.06(\pm 0.37)$ &$0.84(\pm 0.08)$ &$0.57(\pm 0.02)$\\
			\hline
			Ours 	&$\mathbf{1.03}(\pm 0.32)$ &$\mathbf{0.81}(\pm 0.10)$ &$\mathbf{0.55}(\pm 0.02)$\\		
		\end{tabular}
		\label{tab:semi}
		\vspace{-6mm}
	\end{table*}

	\subsection{Visualization on Learned Representations}
	This section is devoted to providing intuitive understanding on the implications of ensemble robustness for different deep learning algorithms. We employ tSNE \citep{van2008visualizing} to  visualize the learned representations from Network-I on MNIST in Figure \ref{fig:visualize} from five different learning algorithms. We also visualize the raw data with  color codes for different categories. To demonstrate the distribution of different types of samples, we encode the ``training'', ``test'' and ``adversarial'' ones by different shapes. We  use different colors to indicate whether they are classified correctly. Here the adversarial examples are generated with relatively large perturbation magnitude $ 0.4$.
	
	The visualization results offer following observations. First of all, compared with raw data, representations output from all the deep learning algorithms show clearer cluster (sub-manifold) structure. Deep learning algorithms are effective at extracting discriminative representations. Secondly, from the visualization of SGD and dropout results, one can observe that most of the perturbed examples are misclassified (there are many red crosses). This indicates these two algorithms possess  ensemble robustness, and their generalization performance is also worse than others. In contrast, for the learning algorithms with explicit adversarial training, much fewer perturbed examples are misclassified. These algorithms have stronger ensemble robustness and better generalization performance. This again confirms our claim on the importance of ensemble robustness for algorithms generalization performance. Thirdly,  taking a closer look at  (d)-(f), we can observe that representations of adversarial examples are very close to the original examples. Recalling the dependency of ensemble robustness on the sample space partitions $C_i$, this implies the algorithms in (d)-(f) are associated with a larger $C_i$ capable of accommodating more perturbed samples (and correspondingly smaller $K$). This feature also endows the algorithms with better generalization performance. 
	
		\begin{figure*}[h]
			\subfigure[raw data]{
				\includegraphics[width=0.3\linewidth]{./figure/original.pdf}
			}
			\subfigure[SGD]{
				\includegraphics[width=0.3\linewidth]{./figure/sgd_2c.pdf}
			}
			\subfigure[Dropout]{
				\includegraphics[width=0.3\linewidth]{./figure/dropout_2c.pdf}
			}
			\subfigure[$\ell_1$ adversarial training]{
				\includegraphics[width=0.3\linewidth]{./figure/l1_2c.pdf}
			}
			\subfigure[$\ell_2$ adversarial training]{
				\includegraphics[width=0.3\linewidth]{./figure/l2_2c.pdf}
			}
			\subfigure[$\ell_\infty$ adversarial training]{
				\includegraphics[width=0.3\linewidth]{./figure/linf_2c.pdf}
			}
			\caption{tSNE visualization on the learned representation from a deep convolutional neural network trained  with different algorithms (on the same training dataset). (a) raw data; (b) vanilla SGD; (c) SGD plus dropout; (d)-(f) adversarial training with different measures on the magnitude of  perturbation (ref. \eqref{eqn:syn_adversarial}). Different types of the sample are marked by different shapes:  $\triangle$  for training, $\circ$ for test and $+$ for perturbed examples. In (a) samples are colored by their groundtruth categories. In (b)-(f),  samples  classified correctly are marked by green and those not correctly classified are marked by red color. Similar to (a), each cluster fo samples corresponds to one category.  Compared with SGD and dropout, the adversarial training algorithms in (d)-(f) provides more compact clusters with larger margin among different clusters. In addition, their classification accuracy on perturbed samples is much higher than SGD and dropout.  }
			\label{fig:visualize}
			%	\vspace{-6mm}
		\end{figure*}
	
	\subsection{Application for Semi-supervised Learning}
	In this subsection, we showcase  a new semi-supervised learning method for deep neural network training, inspired by the result given in Corollary \ref{coro:min_risk}. The idea of using unsupervised learning to complement supervision is not new. Combining an auxiliary task to help train a neural network was investigated in \citep{suddarth1990rule}. By sharing the hidden representations among more than one task, the network generalizes better. There are multiple choices for the unsupervised task. Here, we follow Corollary \ref{coro:min_risk} and define the unsupervised learning task as the one to classify the slightly perturbed training samples (without labels) into the same class as original samples without perturbation. With the experiments on the MNIST dataset, we compare our method to other semi-supervised methods to demonstrate how the introduced perturbation on training samples could help improve the classification accuracy, without requiring additional training samples. The network architecture and parameter setting in this experiment also follow the ones introduced in Section \ref{subsec:details}.

	For evaluating semi-supervised learning, we use the standard $ 10,000 $ test samples as a held-out test set and randomly split the standard $ 60,000 $ training samples into a $ 10,000 $-sample validation set and use $ 50,000 $ as the training set. From the training set, we randomly choose $ 100 $, $1000$, or all labels for the supervised cost. We also balance the classes to ensure that no particular class is over-represented. We repeat each training $ 10 $ times, varying the random seed that is used for the splits. The hyperparameter we tune for our proposed method is the magnitude of adversarial perturbation on the unlabeled training examples. We optimize such a hyperparameter with a search grid $\{0.01,0.1,0.2,0.5\}$. The results presented in Table \ref{tab:semi} show that the proposed method outperforms all the previously reported results.

%	\section{Additional Simulation Results}
%	We give the tSNE visualization results with  samples encoded by category-wise color. The results in the main text are obtained by converting the $10$ categories  to $2$ categories corresponding to correct or incorrect classification.
%	\begin{figure*}[h]
%		\subfigure[raw data]{
%			\includegraphics[width=0.3\linewidth]{./figure/original.pdf}
%			}
%			\subfigure[SGD]{
%				\includegraphics[width=0.3\linewidth]{./}
%				}
%				\subfigure[dropout]{
%					\includegraphics[width=0.3\linewidth]{./}
%					}
%					\subfigure[$\ell_1$ adversarial training]{
%						\includegraphics[width=0.3\linewidth]{./}
%						}
%						\subfigure[$\ell_2$ adversarial training]{
%							\includegraphics[width=0.3\linewidth]{./}
%							}
%							\subfigure[$\ell_\infty$ adversarial training]{
%								\includegraphics[width=0.3\linewidth]{./}
%								}
%								\caption{tSNE visualization on the extracted features from CNN trained with different algorithms. (a) raw pixel values; (b)vanilla SGD; (c) SGD plus dropout; (d)-(f) adversarial training with different measures on the magnitude of adversarial perturbation. Different types of the sample are marked by different shapes:  $\triangle$  for training, $\circ$ for test and $+$ for perturbed examples. Samples are colored by their groundtruth categories.}
%								\end{figure*}

	\subsection{Ensemble Robustness of Different Networks}
	We plot the emprical ensemble robustness of different learning algorithms (five in total) and different networks (\emph{i.e.}, Network I, II and III) in Figure \ref{fig:networks}. Comparing this results with the numbers shown in Table \ref{tab:comparision_baseline} confirms the consistency between the ensemble robustness and generalization performance.
		
		\begin{figure}
			\centering
			\includegraphics[width=0.7\linewidth]{./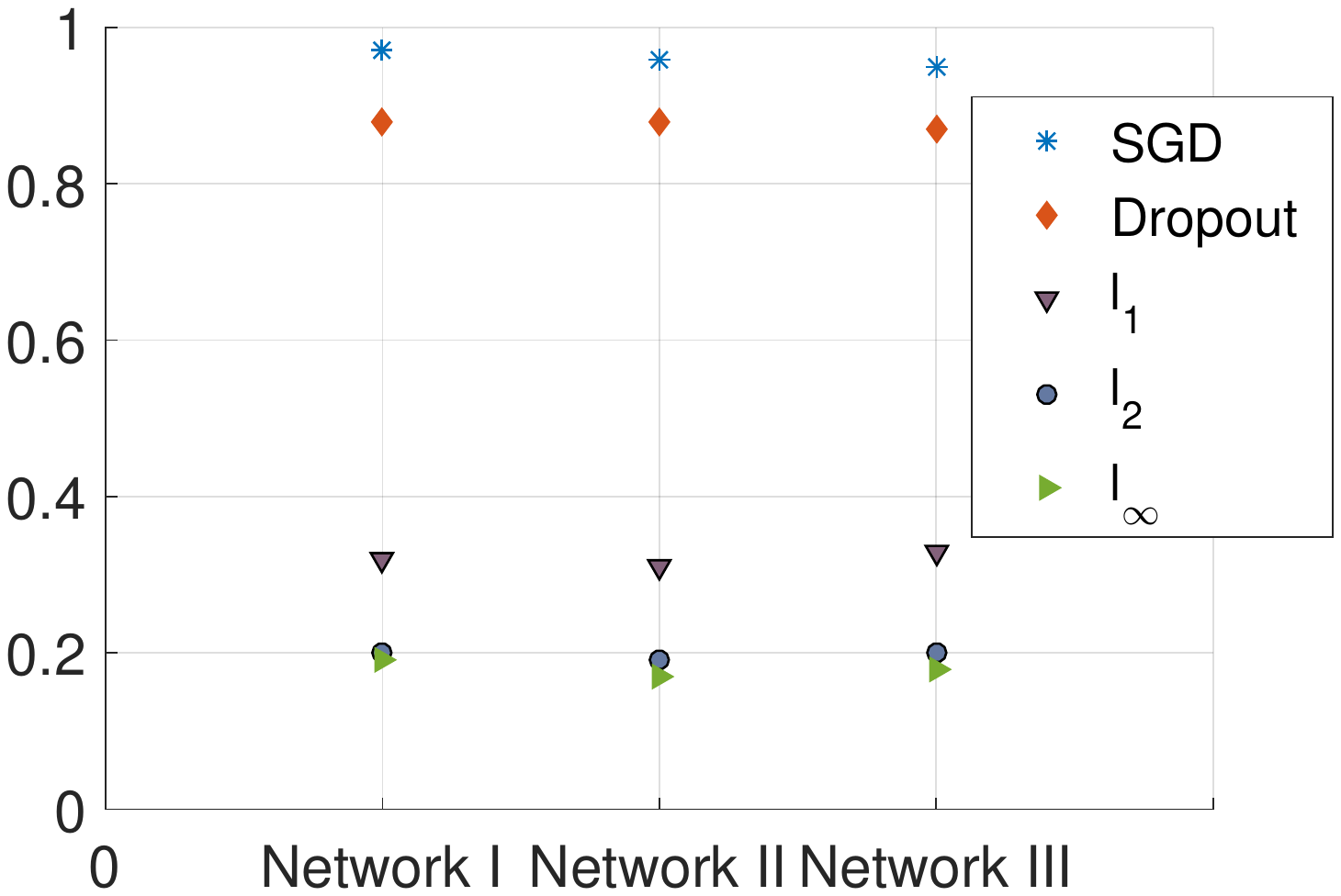}
			\caption{Empirical ensemble robustness of different learning algorithms as well as different networks. The perturbation magnitude is controlled within $0.4$.}
			\label{fig:networks}
			\end{figure}

	\section{Technical Lemmas}
\begin{lemma}
	\label{lemma:robustness}
	For a randomized learning algorithm $ \mathcal{A} $ with $ (K,\bar{\epsilon}(n)) $ uniform ensemble robustness,  and loss function $ \ell $ such that $ 0 \leq \ell(h,z) \leq M $, we have,
	\begin{equation*}
	\mathbb{P}_{\mathbf{s}} \left\{  \mathbb{E}_\mathcal{A}  |\mathcal{L}(h) - \ell_\mathrm{emp}(h) | \leq  \bar{\epsilon}(n)  + M \sqrt{\frac{2K \ln 2 + 2 \ln (1/\delta )}{n}}     \right\} \geq 1- \delta,
	\end{equation*}
	where we use $ \mathbb{P}_\mathbf{s} $ to denote the  probability w.r.t. the choice of $ \mathbf{s} $, and $ |\mathbf{s}|=n $.
\end{lemma}
\begin{proof}
	Given a random choice of training set $ \mathbf{s} $ with cardinality of $ n $, let $ N_i $ be the set of index of points of $ \mathbf{s} $ that fall into the $ C_i $. Note that $ (|N_1|,\ldots,|N_K|) $ is an i.i.d.\ multinomial random variable with parameters $ n $ and $ (\mu(C_1),\ldots,\mu(C_K)) $.	The following holds by the Breteganolle-Huber-Carol inequality:
	\begin{equation*}
	\mathbb{P}_\mathbf{s} \left\{ \sum_{i=1}^K \left|\frac{|N_i|}{n} - \mu(C_i)\right| \geq \lambda   \right\} \leq 2^K\exp\left(\frac{-n\lambda^2}{2}\right).
	\end{equation*}	
	We have
	\begin{align}
	\label{eqn:gen_bd_expect}
	&  \mathbb{E}_{\mathcal{A}} |\mathcal{L}(\mathcal{A}_\mathbf{s}) - \ell_\mathrm{emp}(\mathcal{A}_\mathbf{s}) | \nonumber \\
	&= \mathbb{E}_\mathcal{A} \left| \sum_{i=1}^K \mathbb{E}_{z \sim \mu} (\ell(\mathcal{A}_\mathbf{s},z)|z\in C_i) \mu(C_i) - \frac{1}{n}\sum_{i=1}^n \ell(\mathcal{A}_\mathbf{s},s_i)\right| \nonumber \\
	&\overset{(a)}{\leq} \mathbb{E}_\mathcal{A} \left| \sum_{i=1}^K \mathbb{E}_{z \sim \mu} (\ell(\mathcal{A}_\mathbf{s},z)|z\in C_i) \frac{|N_i|}{n} - \frac{1}{n}\sum_{i=1}^n \ell(\mathcal{A}_\mathbf{s},s_i)\right| \nonumber \\
	& \qquad +  \mathbb{E}_\mathcal{A} \left| \sum_{i=1}^K \mathbb{E}_{z\sim \mu}  (\ell(\mathcal{A}_\mathbf{s},z)|z\in C_i)\mu(C_i) - \sum_{i=1}^K \mathbb{E}_{z\sim \mu } (\ell(\mathcal{A}_\mathbf{s},z)|z\in C_i)\frac{|N_i|}{n}\right| \nonumber \\
	& \overset{(b)}{\leq} \sum_{i=1}^K \mathbb{E}_\mathcal{A} \left|  \mathbb{E}_{z \sim \mu} (\ell(\mathcal{A}_\mathbf{s},z)|z\in C_i) \frac{|N_i|}{n} - \frac{1}{n}\sum_{j \in N_i} \ell(\mathcal{A}_\mathbf{s},s_j)\right| \nonumber \\
	& \qquad +  \mathbb{E}_\mathcal{A} \left| \sum_{i=1}^K \mathbb{E}_{z\sim \mu}  (\ell(\mathcal{A}_\mathbf{s},z)|z\in C_i)\mu(C_i) - \sum_{i=1}^K \mathbb{E}_{z\sim \mu } (\ell(\mathcal{A}_\mathbf{s},z)|z\in C_i)\frac{|N_i|}{n}\right| \nonumber \\
	& \leq \frac{1}{n}\sum_{i=1}^K \sum_{j\in N_i} \mathbb{E}_\mathcal{A} \left(\max_{z \in C_i}|\ell(\mathcal{A}_\mathbf{s},s_j) - \ell(\mathcal{A}_\mathbf{s},z)|\right) + \max_{z\in \mathcal{Z}} |\ell(\mathcal{A}_\mathbf{s},z)| \sum_{i=1}^K \left|\frac{|N_i|}{n} - \mu(C_i)\right| \\
	& \overset{(c)}{\leq} \bar{\epsilon}(n) +  M \sum_{i=1}^K \left|\frac{|N_i|}{n} - \mu(C_i)\right| \nonumber \\
	& \overset{(d)}{\leq}  \bar{\epsilon}(n) +  M \sqrt{\frac{2K\ln 2 + 2\ln(1/\delta)}{n}} %\text{ (w.p. } \geq 1-\delta \text{)}.
	\end{align}
	Here the inequalities $ (a) $ and $(b)$ are due to triangle inequality, $(c)$ is from the definition of ensemble robustness and the fact that the loss function is upper bounded by $ M $, and  $ (d) $ holds with a probability greater than $ 1-\delta $. 
\end{proof}

\begin{lemma}
	\label{lemma:var_bound}
	For a randomized learning algorithm $ \mathcal{A} $ with $ (K,\bar{\epsilon}(n)) $ uniform ensemble robustness, and loss function $ \ell $ such that $ 0 \leq \ell(h,z) \leq M $, we have,
	\begin{equation*}
	\mathbb{E}_{\mathbf{s}}  |\mathcal{L}(h) - \ell_\mathrm{emp}(h) |^2 \leq M \bar{\epsilon}(n) + \frac{2M^2}{n} .%=  M  \mathbb{E}_\mathbf{s} \max_{s\in \mathbf{s}, z\sim s} |\ell(h,s) - \ell(h,z)| + \frac{2M^2}{n} .  
	\end{equation*}
\end{lemma}
\begin{proof}
	Let $ N_i $ be the set of index of points of $ \mathbf{s} $ that fall into the $ C_i $. Note that $ (|N_1|,\ldots,|N_K|) $ is an i.i.d.\ multinomial random variable with parameters $ n $ and $ (\mu(C_1),\ldots,\mu(C_K) $. Then $ \mathbb{E}_\mathbf{s} |N_k| = n\cdot \mu(C_k) $ for $ k=1,\ldots,K $.
	\begin{align*}
	&\mathbb{E}_\mathbf{s}|\mathcal{L}(h) - \ell_\mathrm{emp}(h)|^2 \\
	& = \mathbb{E}_\mathbf{s}\left|\mathbb{E}_{z\in\mathcal{Z}} \ell(h,z) - \frac{1}{n} \sum_{i=1}^n \ell(h,s_i) \right|^2 \\
	& = \mathbb{E}_\mathbf{s} \left| \sum_{k=1}^K \mathbb{E}_{z\in \mathcal{Z}} \ell(h,z|z \in C_k) \mu(C_k) - \frac{1}{n} \sum_{i=1}^n \ell(h,s_i)  \right|^2 \\
	& =  \left(\sum_{k=1}^K \mathbb{E}_{z\in \mathcal{Z}} \ell(h,z|z \in C_k) \mu(C_k) \right)^2 + \frac{1}{n^2} \mathbb{E}_\mathbf{s} \left(  \sum_{i=1}^n \ell(h,s_i) \right)^2  \\
	& \qquad - 2 \left(\sum_{k=1}^K \mathbb{E}_{z\in \mathcal{Z}} \ell(h,z|z \in C_k) \mu(C_k)  \right) \mathbb{E}_\mathbf{s} \left( \frac{1}{n} \sum_{i=1}^n \ell(h,s_i)  \right) \\
	& \leq  \left(\sum_{k=1}^K \mathbb{E}_{z\in \mathcal{Z}} \ell(h,z|z \in C_k) \mu(C_k) \right) \left| \sum_{k=1}^K \mathbb{E}_{z\in \mathcal{Z}} \ell(h,z|z \in C_k) \mu(C_k) -  \mathbb{E}_\mathbf{s} \frac{1}{n} \sum_{i=1}^{n}\ell(h,s_i) \right| \\
	& \qquad + \frac{1}{n^2} \mathbb{E}_\mathbf{s} \left(  \sum_{i=1}^n \ell(h,s_i) \right)^2 - \left(\sum_{k=1}^K \mathbb{E}_{z\in \mathcal{Z}} \ell(h,z|z \in C_k) \mu(C_k)  \right) \mathbb{E}_\mathbf{s} \left( \frac{1}{n} \sum_{i=1}^n \ell(h,s_i)  \right) \\
	& \leq M \underbrace{\left| \sum_{k=1}^K \mathbb{E}_{z\in \mathcal{Z}} \ell(h,z|z \in C_k) \mu(C_k) -  \mathbb{E}_\mathbf{s} \frac{1}{n} \sum_{i=1}^{n}\ell(h,s_i) \right|}_H + \frac{2M^2}{n}
	\end{align*}
	We then bound the term $H$ as follows.
	\begin{align*}
	H &= \left|  \sum_{k=1}^K \mathbb{E}_{z\in \mathcal{Z}} \ell(h,z|z \in C_k) \mathbb{E}_\mathbf{s} \frac{N_k}{n} -  \mathbb{E}_\mathbf{s} \frac{1}{n} \sum_{i=1}^{n}\ell(h,s_i)   \right| \\
	& \leq \frac{1}{n} \mathbb{E}_\mathbf{s}\left| \sum_{k=1}^K \mathbb{E}_{z\in\mathcal{Z}} \ell(h,z|z \in C_k) N_k - \sum_{j\in C_k} \ell(h,s_j)   \right| \\
	& \leq \frac{1}{n} \mathbb{E}_\mathbf{s} \sum_{k=1}^K N_k \max_{s_j \in C_k, z \in C_k  } |\ell(h,z) - \ell(h,s_j)| \\
	& = \frac{1}{n}  \sum_{k=1}^K N_k \mathbb{E}_\mathbf{s} \max_{s_j \in C_k, z \in C_k  } |\ell(h,z) - \ell(h,s_j)| \\
	& \leq \bar{\epsilon}(n).
	\end{align*}
	
	Then we have,
	\begin{equation*}
	\mathbb{E}_\mathbf{s}|\mathcal{L}(h) - \ell_\mathrm{emp}(h)|^2  \leq M \bar{\epsilon}(n) + \frac{2M^2}{n}.
	\end{equation*}
\end{proof}

	To analyze the generalization performance of deep learning with dropout, following lemma is central.
	\begin{lemma}[Bounded difference inequality  \cite{mcdiarmid1989method}]
		\label{lemma:difference}
		Let $ \mathbf{r}=(\mathbf{r}_1,\ldots,\mathbf{r}_L) \in \mathcal{R} $ be  $ L $ independent  random variables ($ \mathbf{r}_l $ can be vectors or scalars) with $ \mathbf{r}_l \in \{0,1\}^{m_l} $.  Assume that the function $ f: \mathcal{R}^L \rightarrow \mathbb{R} $ satisfies:
		\begin{equation*}
		\sup_{\mathbf{r}^{(l)},\widetilde{\mathbf{r}}^{(l)} } \left| f(\mathbf{r}^{(l)}) - f(\widetilde{\mathbf{r}}^{(l)}) \right| \leq c_l, \forall  l=1,\ldots, L,
		\end{equation*}
		whenever $ \mathbf{r}^{(l)} $ and $\widetilde{\mathbf{r}}^{(l)}$ differ only in the $ l $-th element. Here, $ c_l $ is a nonnegative function of $ l $. Then, for every $ \epsilon >0 $,
		\begin{align*}
		&\mathbb{P}_\mathbf{r} \left\{f(\mathbf{r}_1,\ldots,\mathbf{r}_L) - \mathbb{E}_\mathbf{r}f(\mathbf{r}_1,\ldots,\mathbf{r}_L)  \geq \epsilon  \right\} \\
		& \qquad \leq  \exp\left(-2\epsilon^2/\sum_{l=1}^L c_l^2\right) .
		\end{align*}
	\end{lemma}
	
\section{Proof of Theorem \ref{theo:high_prob_bd}}
%The proof of Theorem \ref{theo:high_prob_bd} builds on the following lemmas.

\begin{proof}[Proof of Theorem \ref{theo:high_prob_bd}]
Now we proceed to prove Theorem \ref{theo:high_prob_bd}. Using Chebyshev's inequality, Lemma \ref{lemma:var_bound} leads to the following inequality:
\begin{equation*}
\mathrm{Pr}_\mathbf{s} \left\{   |\mathcal{L}(h) - \ell_\mathrm{emp}(h) | \geq \epsilon |h \right\} \leq \frac{n M \mathbb{E}_\mathbf{s} \max_{s\in \mathbf{s}, z\sim s} |\ell(h,s) - \ell(h,z)| + 2M^2}{n\epsilon^2}.
\end{equation*}

By integrating with respect to $ h $, we can derive the following bound on the generalization error:
\begin{equation*}
\mathrm{Pr}_{\mathbf{s},\mathcal{A}} \left\{  |\mathcal{L}(h) - \ell_\mathrm{emp}(h) | \geq \epsilon \right \} \leq \frac{n M \mathbb{E}_{\mathcal{A},s} \max_{s\in \mathbf{s}, z\sim s} |\ell(h,s) - \ell(h,z)| + 2M^2 }{n\epsilon^2}.
\end{equation*}
This is equivalent to:
\begin{equation*}
  |\mathcal{L}(h) - \ell_\mathrm{emp}(h) | \leq \sqrt{ \frac{nM\bar{\epsilon}(n) + 2M^2}{\delta n} }
\end{equation*}
holds with a probability greater than $ 1-\delta $.
\end{proof}

\section{Proof of Theorem \ref{theo:variance_bound}}
\begin{proof}
	To simplify the notations, we use $ X(h) $ to denote the random variable $ \max_{z\sim s} |\ell(h, s) - \ell(h,z)| $. According to the definition of ensemble robustness, we have $ \mathbb{E}_{\mathcal{A}} X(h) \leq \epsilon(n) $. Also, the assumption gives $\mathrm{var}[X(h)] \leq \alpha$. According to Chebyshev's inequality, we have,
	\begin{equation*}
	\mathbb{P}\left\{X(h) \leq \epsilon(n) + \frac{\alpha}{\sqrt{\delta}}\right\} \geq 1- \delta.
	\end{equation*}
	Now, we proceed to bound $ |\mathcal{L}(h) - \ell_{\mathrm{emp}}(h)| $ for any $ h \sim \Delta(\mathcal{H}) $ output by $ \mathcal{A}_\mathbf{s} $.
	
	Following the proof of Lemma \ref{lemma:var_bound}, we also divide the set $ \mathcal{Z} $ into $ K $ disjoint set $ C_1,\ldots, C_K $ and let $ N_i $ be the set of index of points in $ \mathcal{s} $ that fall into $ C_i $. Then we have,
	\begin{align*}
	& |\mathcal{L}(h) - \ell_{\mathrm{emp}}(h)| \\
	& \leq \frac{1}{n}\sum_{i=1}^K\sum_{j\in N_i} \max_{z\in C_i}|\ell(h,s_j) - \ell(h,z)| + \sqrt{\frac{2K\ln 2 + 2\ln(1/\delta)}{n}} \\
	& \leq \epsilon(n) + \frac{\alpha}{\sqrt{\delta}} + \sqrt{\frac{2K\ln 2 + 2\ln(1/\delta)}{n}}
	\end{align*}
	holds with  probability at least $ 1-2\delta $. Let $ \delta$ be $2\delta $, we have,
	\begin{equation*}
	|\mathcal{L}(h) - \ell_{\mathrm{emp}}(h)| \leq \epsilon(n) + \frac{\alpha}{\sqrt{2\delta}} + \sqrt{\frac{2K\ln 2 + 2\ln(1/2\delta)}{n}}
	\end{equation*}
	holds with  probability at least $ 1-\delta $.
	This gives the first inequality in the theorem. The second inequality can be straightforwardly derived from the fact that $ \mathrm{var}(X) = \mathbb{E}[X^2] - (\mathbb{E}[X])^2 \leq M \mathbb{E}[X]-(\mathbb{E}[X])^2  $.
\end{proof}

\section{Proof of Theorem \ref{theo:dropout}}
\begin{proof}
	Let $ R(\mathbf{s},\mathbf{r}) = \mathcal{L}(\mathcal{A}_{\mathbf{s},\mathbf{r}}) - \ell_\mathrm{emp} (\mathcal{A}_{\mathbf{s},\mathbf{r}})$ denote the random variable that we are going to bound. For every $ \mathbf{r},\mathbf{t} \in \mathcal{R}^L $, and $ L\in \mathbb{N} $, we have
	\begin{align*}
	& |R(\mathbf{s},\mathbf{r}) - R(\mathbf{s},\mathbf{t})| \\
	&= \left|\mathbb{E}_{z\in\mathcal{Z}} \left[\ell(\mathcal{A}_{\mathbf{s},\mathbf{r}},z) - \ell(\mathcal{A}_{\mathbf{s},\mathbf{t}},z)\right] - \frac{1}{n}\sum_{i=1}^n \left(\ell(\mathcal{A}_{\mathbf{s},\mathbf{r}},z_i) - \ell(\mathcal{A}_{\mathbf{s},\mathbf{t}},z_i)\right)\right| \\
	&\leq \mathbb{E}_{z\in\mathcal{Z}}  \left| \ell(\mathcal{A}_{\mathbf{s},\mathbf{r}},z) - \ell(\mathcal{A}_{\mathbf{s},\mathbf{t}},z) \right| + \frac{1}{n}\sum_{i=1}^n \left|\ell(\mathcal{A}_{\mathbf{s},\mathbf{r}},z_i) - \ell(\mathcal{A}_{\mathbf{s},\mathbf{t}},z_i)\right|.
	\end{align*}
	According to the definition of $ \beta $:
	\begin{equation*}
	\sup_{\mathbf{r},\mathbf{t} }  |R(\mathbf{s},\mathbf{r})  -  R(\mathbf{s},\mathbf{t}) | \leq 2\beta,
	\end{equation*}
	and applying Lemma \ref{lemma:difference} we obtain (note that $ \mathbf{s} $ is independent of $ \mathbf{r} $)
	\begin{equation*}
	\mathbb{P}_{\mathbf{r}} \left\{R(\mathbf{s},\mathbf{r}) - \mathbb{E}_\mathbf{r} R(\mathbf{s},\mathbf{r}) \geq \epsilon |\mathbf{s} \right\} \leq \exp\left(\frac{-\epsilon^2}{2L\beta^2  } \right).
	\end{equation*}
	We also have
	\begin{equation*}
	\mathbb{E}_\mathbf{s} \mathbb{P}_{\mathbf{r}} \left\{R(\mathbf{s},\mathbf{r}) - \mathbb{E}_\mathbf{r} R(\mathbf{s},\mathbf{r}) \geq \epsilon  \right\} 
	= \mathbb{E}_\mathbf{s} \mathbb{P}_{\mathbf{r}} \left\{R(\mathbf{s},\mathbf{r}) - \mathbb{E}_\mathbf{r} R(\mathbf{s},\mathbf{r}) \geq \epsilon |\mathbf{s}  \right\} 
	\leq \exp\left(\frac{-\epsilon^2}{2L\beta^2  } \right).
	\end{equation*}
	Setting the r.h.s. equal to $ \delta $ and writing $ \epsilon $ as a function of $ \delta $, we have that with probability at least $ 1-\delta $ w.r.t.\ the random sampling of $ \mathbf{s} $ and $ \mathbf{r} $:
	\begin{equation*}
	R(\mathbf{s},\mathbf{r}) - \mathbb{E}_\mathbf{r} R(\mathbf{s},\mathbf{r}) \leq \beta \sqrt{2L\log(1/\delta)}.
	\end{equation*}
	Then according to Lemma \ref{lemma:robustness}:
	\begin{equation*}
	\mathbb{E}_\mathbf{r} R(\mathbf{s},\mathbf{r}) \leq \bar{\epsilon}(n) + \sqrt{\frac{2K \ln 2 + 2 \ln(1/\delta)}{n}}
	\end{equation*}
	holds with probability greater than $ 1-\delta $. Observe that the above two inequalities hold simultaneously with probability at least $ 1-2\delta $. Combining those inequalities and setting $ \delta = \delta/2 $ gives
	\begin{equation*}
	R(\mathbf{s},\mathbf{r}) \leq  \beta \sqrt{2L\log(1/\delta)} + \bar{\epsilon}(n) + \sqrt{\frac{2K \ln 2 + 2 \ln(2/\delta)}{n}}.
	\end{equation*}
	
%	We first bound the expectation of $ R(\mathbf{s},\mathbf{r}) $. Define $ G(\mathbf{s},z):=\mathbb{E}_\mathbf{r} [\ell(\mathcal{A}_{\mathbf{s},\mathbf{r}},z)] $. We have
%	\begin{align*}
%	&\mathbb{E}_{\mathbf{s},\mathbf{r} } R(\mathbf{s},\mathbf{r}) \\
%	&= \mathbb{E}_\mathbf{s} \left[\mathbf{E}_{z\in\mathcal{Z}} \left[ G(\mathbf{s},z) - \frac{1}{n}\sum_{i=1}^{n}G(\mathbf{s},z_i) \right]\right] \\
%	&= \mathbb{E}_{\mathbf{s},z\in\mathcal{Z}} [G(\mathbf{s},z)] - \frac{1}{n}\sum_{i=1}^{n} \mathbb{E}_{\mathbf{s}} G(\mathbf{s},z_i)\\
%	&\leq 2 \epsilon(n).
%	\end{align*}
\end{proof}

%\section{Ensemble Learning}
%
%	Ensemble learning consists of running the same randomized learning algorithm $\mathcal{A}$ with a number of $ T $ different randomness and averaging the obtained hypothesis. It is also a common practice for improving the performance of deep learning. We denote the the  generated hypothesis as $ h_1,\ldots, h_T  $ from the same distribution $ \Delta(\mathcal{H}) $.
%	
%	
%	\begin{theorem}
%		Let $ \mathcal{A} $ be a random algorithm with $  (K,\bar{\epsilon}(n)) $ ensemble robustness over the training set $ \mathbf{s} $, where  $ |\mathbf{s}|= n $.  Let  $ \Delta(\mathcal{H}) \leftarrow \mathcal{A}:\mathbf{s} $ denote the output hypothesis distribution  of the random algorithm $ \mathcal{A} $. Let $ h_1,\ldots, h_T $ be the $ T $ randomly sampled hypothesis from $ \Delta(\mathcal{H}) $, then the averaged hypothesis $ \bar{h} = \sum_{t=1}^T h_t/T $ is $ (K, c \bar{\epsilon}(n)) $ ensemble robustness, with $ 0<c < 1$.
%		\end{theorem}
%		
%		\begin{corollary}
%			Let $ \mathcal{A} $ be a random algorithm with $ (K,\bar{\epsilon}(n)) $ ensemble robustness over the training set $ \mathbf{s} $, with  $ |\mathbf{s}|= n $.  Let  $ \Delta(\mathcal{H}) \leftarrow \mathcal{A}: \mathbf{s} $ denote the output hypothesis distribution  of the random algorithm $ \mathcal{A} $. Let $ h_1,\ldots, h_T $ be the $ T $ randomly sampled hypothesis from $ \Delta(\mathcal{H}) $. Then following holds for  the averaged hypothesis $ \bar{h} = \sum_{t=1}^T h_t/T $, 
%			\begin{equation*}
%			|\mathcal{L}(h) - \ell_\mathrm{emp}(h) | \leq \sqrt{ \frac{nMc\epsilon(n) + 2M^2}{\delta n} }.
%			\end{equation*}
%			with a probability at least $ 1-\delta $.
%			\end{corollary}
%			
%	\section{Prioritized Sweeping}
%	We can explain the prioritized sweeping by adversarial training as suggested in the Corollary \ref{coro:min_risk}.
%	\begin{theorem}
%		\label{theo:sweep}
%		Let $ C_1,\ldots,C_K $ be a partition of $ \mathcal{Z} $, and write $ z_1 \sim z_2 $ if $ z_1,z_2 $ fall into the same $ C_k $. If the training sample $ \mathbf{s} $ is generated by i.i.d.\ draws from $ \mu $, then with probability at least $ 1-\delta $, the following holds uniformly over $ h \in \mathcal{H} $
%		\begin{equation*}
%		\mathcal{L}(h) \leq \frac{1}{n}\sum_{i=1}^n  w(s_i) \ell(h,s_i) + \sqrt{ \frac{nM\epsilon(n) + 2M^2}{\delta n} }.
%		\end{equation*}
%	\end{theorem}

\bibliographystyle{plain}
%\footnotesize
\bibliography{ensemble}